\newtheorem{definition}{Definition}
\newtheorem{example}{Example}
\newtheorem{proposition}{Proposition}
\newtheorem{lemma}{Lemma}
\newcommand{\ie}{\emph{i.e.~}}
\newcommand{\eg}{\emph{e.g.~}}
\newcommand{\al}{\emph{al.~}}
\newcommand{\espace}{\vspace{7pt}}
\newcommand{\seq}[1]{\bm{#1}}
\newcommand{\ignore}[1]{}
\newcommand\myeq{\stackrel{\mathclap{\normalfont}}{:=}}
\newcommand*{\dom}{\eqslantgtr}
\newcommand{\nodom}{-}
\newcommand*\nspincl{\mathrel{\ooalign{$\lhd$\cr\hidewidth\hbox{$\cdot\mkern 2.5mu$}\cr}}}
\newcommand*\nspinclplus{\mathrel{\ooalign{$\lhd$\vspace{-1.2pt}\cr\hidewidth\hbox{\tiny$+\mkern 4mu$}\cr}}}
\newcommand*\gennonincl{\mathlarger{*}}
\newcommand*\partialnonincl{G}
\newcommand*\totalnonincl{D}
\newcommand*\gennoninclrel{\not\subseteq_{{\mathlarger{*}}}}
\newcommand*\partialnoninclrel{\not\subseteq_{G}}
\newcommand*\totalnoninclrel{\not\subseteq_{D}}
\newcommand*\strictemb{
		 \mbox{\parbox{0.8ex}{
		 \begin{tikzpicture}[scale=0.4]
 		 \draw (0,0) circle (1ex);\fill (1ex,0) arc (0:360:1ex) -- (0,0) -- cycle;
		 \end{tikzpicture}
		 }}
}
\newcommand*\softemb{
		 \mbox{\parbox{0.8ex}{
		 \begin{tikzpicture}[scale=0.4]
 		 \draw (0,0) circle (1ex);
		 \end{tikzpicture}
		 }}
}
\newcommand*\genemb{
		 \mbox{\parbox{0.8ex}{
		 \begin{tikzpicture}[scale=0.4]
 		 \draw (0,0) circle (1ex);\fill (0,1ex) arc (90:270:1ex) -- (0,0) -- cycle;
		 \end{tikzpicture}
		 }}
}
\newcommand\stronglycontains{\sqsubseteq}
\newcommand\weaklycontains{\preceq}
\newcommand\gencontains{\strictif}
\begin{document}

\title{Semantics of negative sequential patterns}

\author{Philippe Besnard \institute{CNRS\,/\,IRIT, France, email: besnard@irit.fr} \and Guyet Thomas
\institute{Institut~Agro~/~IRISA UMR6074, France, email: thomas.guyet@irisa.fr} }

\maketitle
\bibliographystyle{ecai}

\begin{abstract}
In the field of pattern mining, a negative sequential pattern is specified by means of a sequence consisting of events to occur and of other events, called negative events, to be absent. For instance, containment of the pattern $\langle a\ \neg b\ c\rangle$ arises with an occurrence of $a$ and a subsequent occurrence of $c$ but no occurrence of $b$ in between.

This article is to shed light on the ambiguity of such a seemingly intuitive notation and we identify eight possible semantics for the containment relation between a pattern and a sequence. These semantics are illustrated and formally studied, in particular we propose dominance and equivalence relations between them. Also we prove that support is anti-monotonic for some of these semantics. Some of the results are discussed with the aim of developing algorithms to extract efficiently frequent negative patterns.
\end{abstract}

\section{Introduction}
In many application domains such as predictive maintenance or marketing, decision makers are interested in discovering specific events that trigger or are correlated to undesirable events. Sequential pattern mining \cite{Mooney:2013} is a technique that extracts such hidden rules from logs. 

Often, the presence but also the absence of a specific action or event partly explains the occurrence of an undesirable situation \cite{Cao2015}. 
For example in predictive maintenance, if some maintenance operations have not been performed, \eg damaged parts have not been replaced, then a fault 
is likely to occur in a short delay whereas if these operations were performed in time the fault would not occur. In marketing, if a marketplace customer has not received special offers or coupons for a long time then s/he has a high probability of churning whereas if s/he were provided with such offers s/he should remain loyal to her/his marketplace.
Mining specific events to discover a context under which they occur, or do not occur, may provide interesting information.
It is called \emph{actionable} information as it serves to determine what action should be performed to avoid the undesirable situation, \ie fault in monitored systems, churn in marketing, \ldots

\begin{figure}
\caption{On the left, a synthetic dataset of six sequences. On the right, a set of rules with their support and accuracy in the dataset.}
\label{tab:intro}
\centering
\renewcommand{\arraystretch}{1.2}
\begin{tabular}{@{}ll@{}}
\toprule 
$\seq{s_1}$ & $\langle a\ c\ e\rangle$ \\
$\seq{s_2}$ & $\langle a\ b\ c\ e\rangle$ \\
$\seq{s_3}$ & $\langle a\ b\ c\ e\rangle$ \\
$\seq{s_4}$ & $\langle a\ c\ d\rangle$ \\
$\seq{s_5}$ & $\langle a\ c\ d\rangle$ \\
$\seq{s_6}$ & $\langle a\ c\ d\rangle$ \\
\bottomrule
\end{tabular}\hspace{30pt}
\begin{tabular}{@{}lcc@{}}
\toprule 
Rule & support & accuracy\\
\midrule
$\langle a\ c\rangle \implies e$ & $3$ & $\sfrac{1}{2}$ \\
$\langle a\ c\rangle \implies d$ & $3$ & $\sfrac{1}{2}$ \\
$\langle a\ \neg b\ c\rangle \implies e$ & $1$ & $\sfrac{1}{4}$ \\
$\langle a\ \neg b\ c\rangle \implies d$ & $3$ & $\sfrac{3}{4}$  \\
\bottomrule
\end{tabular}
\end{figure}

Standard sequential pattern mining algorithms \cite{Mooney:2013} extract sequential patterns that frequently occur in the logs. 
A sequential pattern is a sequence of events. For example, the sequential pattern $\langle a\ c\ d\rangle$ is read as ``$a$ occurs and then $c$ occurs and finally $d$ occurs''. In practice, a pattern is frequent if its number of occurrences exceeds a user-defined threshold. 
If $\langle a\ c\ d\rangle$ occurs in most cases where $\langle a\ c\rangle$ occurs, then the sequential rule $\langle a\ c\rangle \implies d $ (read as ``if $a$ occurs and $c$ occurs later, then $d$ occurs afterwards'') is useful to predict occurrences of $d$. 
The premise of a rule specifies what actually occurred frequently, but does not inform about what did not happen in these examples. 
Negative sequential patterns are sequential patterns 
that also specify non-occurring events. 
Intuitively, the syntax of a simple negative sequential pattern is as follows: $\langle a\ \neg b\ c\rangle$. This pattern is read as ``$a$ occurs and then $c$ occurs, but $b$ does not occur in between''. A negative sequential pattern can also be the premise of a rule.

We illustrate the interest of negative sequential patterns via the dataset of sequences in Figure \ref{tab:intro}. The rightmost table gives the support (number of sequences containing both premise and conclusion) and the accuracy (ratio of the support with the support of the premise only) of some 
rules. The sequential patterns $\langle a\ c\ d\rangle$ and $\langle a\ c\ e\rangle$ occur thrice each. Rules $\langle a\ c\rangle \Rightarrow d$
and $\langle a\ c\rangle \Rightarrow e$ obtained from positive sequential patterns have low accuracy, they are not really interesting. 
Let $\seq{s_p}=\langle a\ b\ c\ ?\rangle$ be a new sequence with an event to predict. The two rules above predict $e$ or $d$ with the same likelihood. 

Modeling the absence of event $b$ in patterns appears to be meaningful to describe the dataset. Indeed, rule $\langle a\ \neg b\ c\rangle \Rightarrow d$ occurs in half of the sequences and has an accuracy of $\sfrac{3}{4}$ 
(whereas the accuracy of the rule without $\neg b$ is only $\sfrac{1}{2}$).
When it comes to predicting occurrences of $d$, the absence of $b$ is meaningful. These new rules predict event $e$ with a likelihood of $\sfrac{3}{4}$ for $\seq{s_p}$. 
In a medical context, $a$, $b$ and $c$ may be drug administration while $d$ and $e$ some medical events, respectively, patient declared cured and patient suffering complications. The situation that is illustrated by our synthetic dataset is the case of adverse drugs reaction. Being exposed to drug $b$ while being treated by drugs $a$ and $c$ leads to complications. Mining positive patterns in a medical database would miss such adverse drug reaction.

Mining frequent negative sequential patterns is of utmost interest to discover actionable rules taking into account absent events. 
In \cite{imielinski1996database}, pattern mining is viewed as the computation of a theory $Th(\mathcal{L}, \mathcal{D}, \mathcal{C} )$ $=$ $\left\{\psi \in \mathcal{L} \ |\  \mathcal{C}(\psi, \mathcal{D})\right\}$. 
Given a pattern language $\mathcal{L}$, some constraints $\mathcal{C}$ and a database $\mathcal{D}$, a pattern mining algorithm enumerates the elements of the language that fulfill the constraints within the data.
In the case of frequent pattern mining, $\mathcal{C}$ is the minimal support constraint.
The success of pattern mining techniques comes from an anti-monotonicity property of some support measures~\cite{Agrawal:1994}. 
Intuitively, if a pattern $\seq{p}$ is not frequent, no pattern ``larger'' than $\seq{p}$ is frequent. Pattern mining algorithms prune the search space whenever an unfrequent pattern is found. 
The ``is larger than'' relation induces a partial order on the set of patterns, $\mathcal{L}$.
For a support measure that is anti-monotonic on this structure, the frequent pattern mining trick can be used to efficiently prune the search space. 
Ideally, this structure is a lattice, in which case the above strategy is complete and correct.

As to frequent negative sequential pattern mining, $\mathcal{L}$ is the set of negative sequential patterns, $\mathcal{D}$ is a dataset of sequences and $\mathcal{C}$ is the constraint of minimal frequency.
Few approaches \cite{cao2016nsp,ChenCK03,Gong2017eNSPFI,Guyet2020,hsueh:2008:PNSP,xu2017msnsp,zheng:2009:negative} proposed algorithms to extract such patterns and none of them proposed an algorithm based on an anti-monotonic support measure. 
The questions we address in this article are:
\begin{enumerate}
\item what is a proper support measure for negative sequential patterns?
\item is there a support measure enjoying anti-monotonicity?
\end{enumerate}

The support measure is strongly related to the \textit{containment relation} that determines whether a pattern occurs in a sequence or not. 
In the case of negative sequential patterns, the apparently intuitive notion of absent event appears to be intricate and the negation syntax (the $\neg$ symbol) used in the literature is hiding different semantics.
In logic, it is accepted knowledge that there is more than one kind of negation \cite{wansing2017negation}. For instance, in classical reasoning $\neg p$ means that $p$ is false while in stable reasoning $\neg p$ means that $p$ cannot be proved \cite{cabalar2017stable}.

The objective of this article is not to propose a new pattern mining algorithm for negative sequential patterns but to establish formal results on containment relations that can serve as a basis to design such algorithms. 
The main contributions of our work are as follows:
\begin{itemize}
\item we define eight possible semantics for the containment relation of negative sequential patterns, 
\item we establish dominance and equivalence relations between containment relations,
\item we provide three partial orders for which some containment relations induce anti-monotonic support measures.
\end{itemize}

\section{Negative sequential patterns}\label{sec:negpatterns:definitions}
Throughout this article, $[n]=\{1, \dots, n\}$ denotes the set of the first $n$ positive integers.
Let $\mathcal{I}$ be the set of items (alphabet). 
An \emph{itemset} $A=\{a_1\ a_2\ \cdots\ a_m\}\subseteq \mathcal{I}$ is a 
finite 
set of items. 
The length of $A$, denoted $|A|$, is $m$.
A \emph{sequence} $\seq{s}$ is of the form $\seq{s} = \langle s_1\ s_2\ \cdots\ s_n\rangle$ where each $s_i$ is an itemset.

\begin{definition}[Negative sequential patterns (NSP)]\label{def:negativepattern}
A negative sequential pattern $\seq{p} = \langle p_1\ \neg q_1 \ p_2\  \neg q_2\ \cdots $ $p_{n-1}\ \neg q_{n-1}\ p_n\rangle$ is a finite sequence where $p_i \in 2^{\mathcal{I}}\setminus\{\emptyset\}$ for all $i\in [n]$ and $q_i \in 2^{\mathcal{I}}$ for all $i\in [n-1]$.

The \emph{length} of $\seq{p}$, denoted $|\seq{p}|$ is the number of its non empty itemsets (negative or positive).

$\seq{p}^+=\langle p_1\,\dots\ p_n\rangle$ is called the positive part of the NSP.
\end{definition}

We denote by $\mathcal{N}$ the set of negative sequential patterns.

It can be noticed that Definition \ref{def:negativepattern} introduces syntactic limitations on negative sequential patterns that are commonly encountered in the state of the art \cite{Wang2019}:
\begin{itemize}
\item a pattern can neither start or finish by a negative itemset,
\item a pattern cannot have two successive negative itemsets.
\end{itemize}

\begin{example}[Negative sequential pattern]
This example illustrates the notations introduced in Definition \ref{def:negativepattern}.
Consider $\mathcal{I}=\{a,b,c,d\}$ and $\seq{p}=\langle a\ \neg (bc)\ (ad)\ d\ \neg (ab)\ d\rangle$.
Let $p_1=\{a\}$,  $p_2=\{ad\}$, $p_3=\{d\}$, $p_4=\{d\}$ and $q_1=\{bc\}$, $q_2=\emptyset$, $q_3=\{ab\}$.
The length of $\seq{p}$ is $|\seq{p}|=6$ and $\seq{p}^+=\langle a\ (ad)\ d\ d\rangle$. 
\end{example}

\section{Semantics of negative sequential patterns}\label{sec:negpatterns:semantics}
The semantics of negative sequential patterns relies upon \emph{negative containment}: a sequence $\seq{s}$ supports pattern $\seq{p}$ (or $\seq{p}$ matches the sequence $\seq{s}$) iff $\seq{s}$ contains a sub-sequence $\seq{s}'$ such that every positive itemset of $\seq{p}$ is included in some itemset of $\seq{s}'$ in the same order and for any negative itemset $\neg q_i$ of $\seq{p}$, $q_i$ is \emph{not included} in any itemset occurring in the sub-sequence of $\seq{s}'$ located between the occurrence of the positive itemset preceding $\neg q_i$ in $\seq{p}$ and the occurrence of the positive itemset following $\neg q_i$ in $\seq{p}$.

\begin{definition}[Non inclusion]\label{def:IS_notincluded}
We introduce two relations comparing two itemsets $P \in 2^{\mathcal{I}}\setminus\{\emptyset\}$
and $I\in 2^{\mathcal{I}}$:
\begin{itemize}
\item partial non inclusion: $P\partialnoninclrel I \Leftrightarrow \exists e \in P$, $e \notin I$
\item total non inclusion: $P\totalnoninclrel I \Leftrightarrow \forall e \in P, e \notin I$
\end{itemize}
Partial non-inclusion means that $P \setminus I$ is non-empty while total non-inclusion means that $P$ and $I$ are disjoint.
By convention, $\emptyset\totalnoninclrel I$ and $\emptyset\partialnoninclrel I$ for all $I\subseteq\mathcal{I}$.
\end{definition}

In the sequel we will denote the general form of itemset non-inclusion by the symbol $\gennoninclrel$,  meaning either $\partialnoninclrel$ or $\totalnoninclrel$.

Intuitively, partial non-inclusion identifies the itemset $P$ with a disjunction of negative constraints, \ie at least one of the items (of $P$) has to be absent from $I$, and total non-inclusion consider the itemset $P$ as a conjunction of negative constraints: all items (of $P$) have to be absent from $I$.

Choosing one non-inclusion interpretation or the other has consequences on extracted patterns as well as on pattern search. Let us illustrate this with the following dataset of sequences:
{\small
$$\mathcal{D} = \left\{
\begin{array}{l}
\seq{s}_1=\langle (bc)\ f\ a \rangle \\
\seq{s}_2=\langle (bc)\ (cf)\ a \rangle \\
\seq{s}_3=\langle (bc)\ (df)\ a \rangle \\
\seq{s}_4=\langle (bc)\ (ef)\ a \rangle\\
\seq{s}_5=\langle (bc)\ (cdef)\ a \rangle
\end{array}
\right\}.$$}
Table \ref{tab:partial-total} compares the support of patterns under the two semantics of itemset non-inclusion. 
Since the positive part of $\seq{p}_2$ is in $\seq{s}_2$, $\seq{p}_2$ occurs in the sequence iff $(cd)\gennoninclrel (cf)$. As for total non-inclusion, it is false that $(cd)\totalnoninclrel (cf)$ because $c$ occurs in $(cf)$, and thus $\seq{p}_2$ does not occur in $\seq{s}_2$. As for partial non-inclusion, it is true that $(cd)\partialnoninclrel (cf)$, because $d$ does not occur in $(cf)$, and thus $\seq{p}_2$ occurs in $\seq{s}_2$.

\begin{lemma}\label{lemma:non_incl_relation}\footnote{All proofs can be found in the appendix of this article.}
Let $P, I \subseteq \mathcal{I}$ be two itemsets: 
\begin{equation}
P\totalnoninclrel I \implies P\partialnoninclrel I \label{eq:non_incl_relation}
\end{equation}
\end{lemma}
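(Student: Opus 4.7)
The plan is to unfold the two definitions and observe that the implication reduces to the elementary fact that a universally quantified statement over a non-empty domain implies the corresponding existential statement, together with a short separate check of the empty case covered by the stated convention.

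First I would rewrite the hypothesis $P\totalnoninclrel I$ as ``$\forall e\in P,\ e\notin I$'' and the conclusion $P\partialnoninclrel I$ as ``$\exists e\in P,\ e\notin I$'', directly from Definition~\ref{def:IS_notincluded}. Then I would split on whether $P$ is empty. If $P\neq\emptyset$, pick any witness $e_0\in P$; the hypothesis instantiated at $e_0$ yields $e_0\notin I$, which is exactly the existential required for $P\partialnoninclrel I$. If $P=\emptyset$, the conclusion $\emptyset\partialnoninclrel I$ holds outright by the convention stated at the end of Definition~\ref{def:IS_notincluded}, so the implication is vacuously established on that side as well.

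The only subtlety, and the thing worth stating explicitly rather than gliding past, is the empty itemset: a naive reading would have ``$\forall e\in\emptyset,\ e\notin I$'' true but ``$\exists e\in\emptyset,\ e\notin I$'' false, which would break the implication. This is precisely why the convention that $\emptyset\partialnoninclrel I$ holds is needed, and acknowledging it makes the proof airtight. Beyond that, there is no real obstacle; the lemma is essentially the standard fact that $\forall\Rightarrow\exists$ on any inhabited domain, lifted to the non-inclusion vocabulary of the paper.
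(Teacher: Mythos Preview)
Your proposal is correct and follows essentially the same approach as the paper: both split on whether $P$ is empty, invoke the convention $\emptyset\partialnoninclrel I$ in the empty case, and in the non-empty case pick a witness $e\in P$ and apply the universal hypothesis to obtain the required existential. Your additional remark explaining \emph{why} the empty-set convention is needed is a nice touch that the paper omits.
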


\begin{table}
\small\centering
\caption{Lists of sequences in $\mathcal{D}$ supported by negative patterns $(\seq{p}_i)_{i=1..4}$ under the total and partial non-inclusion relations. Each pattern has the form $\langle b\ \neg q_i\ a\rangle$ where $q_i$ are itemsets such that $q_i \subset q_{i+1}$.}
\label{tab:partial-total}

\renewcommand{\arraystretch}{1.2}
    \begin{tabular}{@{}lcc@{}}
        \toprule
        ~                               & partial & total \\ 
        ~                               & non-inclusion & non-inclusion \\
        ~ & $\partialnoninclrel$ & $\totalnoninclrel$\\
        \midrule
        $\seq{p}_1 = \langle b\ \neg c\ a \rangle$      & $\{\seq{s}_1, \seq{s}_3, \seq{s}_4\}$           & $\{\seq{s}_1, \seq{s}_3, \seq{s}_4\}$ \\ 
        $\seq{p}_2 = \langle b\ \neg (cd)\ a \rangle$   & $\{\seq{s}_1, \seq{s}_2, \seq{s}_3, \seq{s}_4\}$ & $\{\seq{s}_1, \seq{s}_4\}$ \\ 
        $\seq{p}_3 = \langle b\ \neg (cde)\ a \rangle$  & $\{\seq{s}_1, \seq{s}_2, \seq{s}_3, \seq{s}_4\}$   & $\{\seq{s}_1\}$ \\ 
        $\seq{p}_4 = \langle b\ \neg (cdeg)\ a \rangle$ &$\{\seq{s}_1, \seq{s}_2, \seq{s}_3, \seq{s}_4,\seq{s}_5\}$ & $\{\seq{s}_1\}$ \\
        \bottomrule
    \end{tabular}
\end{table}

Now, we formulate the notions of sub-sequence, non-inclusion and absence by means of the concept of embedding.

\begin{definition}[Positive pattern embedding]\label{def:positivepattern_embedding}
Let $\seq{s}=\langle s_1\,\dots\, s_n\rangle$ be a sequence and $\seq{p}=\langle p_1\,\dots\, p_m\rangle$ be a (positive) sequential pattern.
A~tuple
$\seq{e}=(e_i)_{i\in[m]}\in [n]^m$ is an \emph{embedding} of pattern $\seq{p}$ in sequence $\seq{s}$ iff $\forall i\in[m],\; p_i \subseteq s_{e_i}$ and $e_{i}<e_{i+1}$ for all $i\in[m-1]$.
\end{definition}

\begin{definition}[Strict and soft embeddings of negative patterns]\label{def:NSP_embedding}
Let $\seq{s}=\langle s_1\,\dots\, s_n\rangle$ be a sequence and $\seq{p}=\langle p_1\ \neg q_1\ \dots\ \ \neg q_{m-1}\ p_m\rangle$ be a negative sequential pattern.

An increasing\footnote{\label{fn:increasing}By an increasing tuple $\seq{e}$, we mean a tuple such that $e_i < e_{i+1}$ (in particular, repetitions are not allowed).} tuple 
$\seq{e}=(e_i)_{i\in[m]}\in [n]^m$ is a 
$\softemb$-embedding (read: \textbf{soft-embedding})
of pattern $\seq{p}$ in sequence $\seq{s}$ iff:
\begin{itemize}
\item $p_i \subseteq s_{e_i}$ for all $i\in[m]$
\item $q_i \gennoninclrel s_j,\;\forall j\in [e_{i}+1,e_{i+1}-1]$ for all $i\in[m-1]$
\end{itemize}

An increasing\cref{fn:increasing} tuple $\seq{e}=(e_i)_{i\in[m]}\in [n]^m$ is a $\strictemb$-embedding (read: \textbf{strict-embedding}) of pattern $\seq{p}$ in sequence $\seq{s}$ iff:
\begin{itemize}
\item $p_i \subseteq s_{e_i}$ for all $i\in[m]$
\item $q_i \gennoninclrel \bigcup_{j\in [e_{i}+1,e_{i+1}-1]} s_j$ for all $i\in[m-1]$
\end{itemize}
\end{definition}

Intuitively, the constraint of a negative itemset $q_i$ is checked on the sequence's itemsets at positions in interval $[e_{i}+1,e_{i+1}-1]$, \ie between occurrences of the two positive itemsets surrounding the negative itemset in the pattern. 
A soft embedding considers individually each of the sequence's itemsets of $[e_{i}+1,e_{i+1}-1]$ while a strict embedding consider them as a whole. 

\begin{example}[Itemset absence semantics]\label{ex:itemsetsemantic}
Let $\seq{p}=\langle a\ \neg (bc)\ d\rangle$ be a pattern and consider four sequences as follows:
\begin{center}
\centering
\small
\renewcommand{\arraystretch}{1.2}
\begin{tabular}{@{}lcccc@{}}
\toprule
Sequence & $\totalnoninclrel$ & $\totalnoninclrel$ & $\partialnoninclrel$ & $\partialnoninclrel$\\
~ &$\strictemb$ & $\softemb$ & $\strictemb$ & $\softemb$
 \\ \midrule
$\seq{s_1}=\langle a\ c\ b\ e\ d \rangle$ & & & &\ding{51}\\
$\seq{s_2}=\langle a\ (bc)\ e\ d \rangle$ & & & &\\
$\seq{s_3}=\langle a\ b\ e\ d \rangle$ & & &\ding{51} &\ding{51}\\
$\seq{s_4}=\langle a\ e\ d \rangle$ &\ding{51}&\ding{51}&\ding{51} &\ding{51}\\
\bottomrule
\end{tabular}
\end{center}
 The reader can notice that each sequence contains a unique occurrence of $\seq{p}^+=\langle a\ d \rangle$, the positive part of pattern $\seq{p}$. 
Considering soft-embedding and partial non-inclusion ($\gennoninclrel\myeq\partialnoninclrel$), $\seq{p}$ occurs in $\seq{s_1}$, $\seq{s_3}$ and $\seq{s_4}$ but not in $\seq{s_2}$. 
Considering strict-embedding and partial non-inclusion, $\seq{p}$ occurs in $\seq{s_3}$ and  $\seq{s_4}$. Indeed, items $b$ and $c$ occur between occurrences of $a$ and $d$ in $\seq{s_1}$ and $\seq{s_2}$. 
Considering total non-inclusion ($\gennoninclrel\myeq\totalnoninclrel$) and either type of embeddings, the absence of an itemset is satisfied if any of its items is absent. Hence, $\seq{p}$ occurs only in $\seq{s_4}$. 
\end{example}

\begin{lemma}\label{lemma:bulletimpliescirc}
If $\seq{e}$ is a $\strictemb$-embedding, then $\seq{e}$ is a $\softemb$-embedding, 
regardless of whether $\gennoninclrel$ is $\partialnoninclrel$ or $\totalnoninclrel$.
\end{lemma}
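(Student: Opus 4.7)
The plan is to unfold both definitions and observe that the positive part of the embedding condition ($p_i \subseteq s_{e_i}$) is literally identical in the strict and soft cases, so the whole matter reduces to the negative clauses. Concretely, I need to show that for each $i \in [m-1]$, the strict condition $q_i \gennoninclrel \bigcup_{j \in [e_i+1, e_{i+1}-1]} s_j$ entails the soft condition $q_i \gennoninclrel s_j$ for every $j \in [e_i+1, e_{i+1}-1]$.

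I would then split on the two possible instantiations of $\gennoninclrel$. For $\gennoninclrel = \totalnoninclrel$: by Definition \ref{def:IS_notincluded}, $q_i \totalnoninclrel \bigcup_j s_j$ means every element of $q_i$ lies outside the union, hence outside each individual $s_j$, which is exactly $q_i \totalnoninclrel s_j$ for each $j$. For $\gennoninclrel = \partialnoninclrel$: $q_i \partialnoninclrel \bigcup_j s_j$ gives some $e \in q_i$ with $e \notin \bigcup_j s_j$; this same witness $e$ satisfies $e \notin s_j$ for every $j$ in the range, hence $q_i \partialnoninclrel s_j$ for all such $j$.

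The edge case $q_i = \emptyset$ is handled directly by the convention stated in Definition \ref{def:IS_notincluded}, which makes both non-inclusions hold vacuously, so the soft condition is automatic. Similarly, if the range $[e_i+1, e_{i+1}-1]$ is empty (i.e.\ $e_{i+1} = e_i + 1$), then the soft condition is a vacuous universal quantification.

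I do not expect any real obstacle here: the statement is essentially the trivial set-theoretic fact that ``disjoint from a union'' implies ``disjoint from each part'' (total case), together with the equally trivial fact that a single element missing from a union is missing from every part (partial case). The only thing to be careful about is not to inadvertently claim the converse, which is false in the partial case (different witnesses for different $s_j$ would not combine into one witness for the union).
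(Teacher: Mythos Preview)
Your proposal is correct and follows essentially the same route as the paper's own proof: observe that the positive clause is identical in both definitions, then split on $\gennoninclrel\in\{\totalnoninclrel,\partialnoninclrel\}$ and use the elementary fact that an item missing from a union is missing from every term of that union. Your explicit treatment of the edge cases $q_i=\emptyset$ and $[e_i+1,e_{i+1}-1]=\emptyset$ is a welcome addition that the paper's proof leaves implicit.
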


\begin{lemma}\label{prop:sqsubset_eqembeddings}
In the case that $\gennoninclrel$ is $\totalnoninclrel$,
$\seq{e}$ is a $\softemb$-embedding iff $\seq{e}$ is a $\strictemb$-embedding
\end{lemma}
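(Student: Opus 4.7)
The plan is to observe that only one direction requires new work: Lemma \ref{lemma:bulletimpliescirc} already gives that every $\strictemb$-embedding is a $\softemb$-embedding (regardless of which non-inclusion is used). So I would only need to prove that, when $\gennoninclrel$ is $\totalnoninclrel$, a $\softemb$-embedding is also a $\strictemb$-embedding.

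First I would note that the positive conditions $p_i \subseteq s_{e_i}$ are literally the same in both clauses of Definition \ref{def:NSP_embedding}, so nothing has to be checked there. The whole argument reduces to the negative itemset conditions. Fix $i \in [m-1]$ and set $J_i = [e_i+1, e_{i+1}-1]$. Being a $\softemb$-embedding with total non-inclusion means $q_i \totalnoninclrel s_j$ for every $j \in J_i$, i.e., $q_i \cap s_j = \emptyset$ for every such $j$.

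The key (and essentially the only) step is the elementary set-theoretic identity
\begin{equation*}
q_i \cap \bigcup_{j \in J_i} s_j \;=\; \bigcup_{j \in J_i} (q_i \cap s_j).
\end{equation*}
Since every term in the right-hand union is empty by the soft-embedding assumption, the union is empty, i.e., $q_i \totalnoninclrel \bigcup_{j \in J_i} s_j$. This is exactly the strict-embedding condition for the index $i$. Doing this for every $i \in [m-1]$ shows that $\seq{e}$ is a $\strictemb$-embedding.

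There is no real obstacle: the lemma holds because disjointness distributes over unions, which is precisely the feature total non-inclusion has and partial non-inclusion lacks (a single element of $q_i$ missing from \emph{every} $s_j$ can still be scattered across them, so $q_i \partialnoninclrel s_j$ for all $j$ does not yield $q_i \partialnoninclrel \bigcup_j s_j$). I would briefly flag this contrast at the end of the proof, as it explains why the equivalence is specific to $\totalnoninclrel$ and underlines the interest of distinguishing the two embedding notions in the $\partialnoninclrel$ case.
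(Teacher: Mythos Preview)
Your proposal is correct and follows essentially the same approach as the paper: invoke Lemma~\ref{lemma:bulletimpliescirc} for the $\strictemb\Rightarrow\softemb$ direction, and for the converse reduce to the negative itemset condition and use that disjointness distributes over unions. The paper phrases this last step elementwise (swapping the $\forall j$ and $\forall\alpha\in q_i$ quantifiers) rather than via the identity $q_i\cap\bigcup_j s_j=\bigcup_j(q_i\cap s_j)$, but this is the same argument.
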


\begin{lemma}\label{lemma:soft_implies_strict}
Let $p=\langle p_1\ \neg q_1\ \dots\ \neg q_{n-1}\ p_n \rangle \in \mathcal{N}$ such that $|q_i|\leq 1$ for all $i\in[n-1]$, then
$\seq{e}$ is a $\softemb$-embedding iff $\seq{e}$ is a $\strictemb$-embedding.
\end{lemma}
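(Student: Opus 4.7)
The plan is to combine Lemma~\ref{lemma:bulletimpliescirc} (strict always implies soft) with a direct argument for the converse under the singleton hypothesis, so only the ``soft $\Rightarrow$ strict'' direction needs attention. I would split on which non-inclusion relation $\gennoninclrel$ denotes.

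If $\gennoninclrel$ is $\totalnoninclrel$, there is nothing to prove: Lemma~\ref{prop:sqsubset_eqembeddings} already gives equivalence without any restriction on the $q_i$.

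The substantive case is $\gennoninclrel \myeq \partialnoninclrel$. Let $\seq{e}$ be a $\softemb$-embedding. I would fix $i \in [n-1]$ and show $q_i \partialnoninclrel \bigcup_{j \in [e_i+1,e_{i+1}-1]} s_j$. If $q_i = \emptyset$, the convention $\emptyset \partialnoninclrel I$ closes the case. Otherwise $q_i$ is a singleton $\{a\}$ by hypothesis, and the soft-embedding condition $\{a\} \partialnoninclrel s_j$ unfolds (by Definition~\ref{def:IS_notincluded}) to $a \notin s_j$ for every $j \in [e_i+1, e_{i+1}-1]$. Taking the union yields $a \notin \bigcup_j s_j$, which is exactly $\{a\} \partialnoninclrel \bigcup_j s_j$, the strict-embedding condition at index $i$. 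Since $i$ was arbitrary and the positive-itemset condition $p_i \subseteq s_{e_i}$ is shared by both embedding notions, $\seq{e}$ is a $\strictemb$-embedding.

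There is no real obstacle here; the only subtlety worth stating explicitly in the write-up is the elementary set-theoretic fact that, for a single element $a$, the assertion ``$a$ is absent from each itemset'' coincides with ``$a$ is absent from their union''. This is exactly the property that fails for $|q_i| \geq 2$ under $\partialnoninclrel$ (as witnessed by $\seq{p}_2$ on $\seq{s}_2$ in the running example of the previous section), which is why the singleton assumption is essential.
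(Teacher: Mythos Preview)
Your proof is correct and follows essentially the same route as the paper's: invoke Lemma~\ref{prop:sqsubset_eqembeddings} to dispose of the $\totalnoninclrel$ case, then for $\partialnoninclrel$ split on $|q_i|=0$ versus $|q_i|=1$ and use the elementary fact that a single element absent from each $s_j$ is absent from their union. The only minor addition is that you explicitly cite Lemma~\ref{lemma:bulletimpliescirc} for the strict\,$\Rightarrow$\,soft direction, which the paper leaves implicit; one small quibble is that the example you point to at the end ($\seq{p}_2$ on $\seq{s}_2$ from Table~\ref{tab:partial-total}) does not actually separate soft from strict embedding---Example~\ref{ex:itemsetsemantic} with $\seq{s_1}$ is the relevant witness.
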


Lemma \ref{lemma:soft_implies_strict} shows that in the simple case of patterns 
where all negative itemsets are singleton sets, the notions of strict and soft embeddings coincide.

\begin{lemma}\label{lemma:pos_embedding}
Let $\seq{p} \in \mathcal{N}$, if $\seq{e}$ is an embedding of $\seq{p}$ in some sequence $\seq{s}$, then $\seq{e}$ is an embedding of $\seq{p}^+$ in $\seq{s}$.
\end{lemma}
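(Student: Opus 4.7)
The plan is to show the statement essentially by unfolding the definitions, since the conditions for being a positive embedding of $\seq{p}^+$ are a strict subset of the conditions imposed on an embedding of $\seq{p}$.

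First, I would fix the negative pattern $\seq{p} = \langle p_1\ \neg q_1\ \dots\ \neg q_{n-1}\ p_n\rangle$, so that $\seq{p}^+ = \langle p_1\,\dots\, p_n\rangle$, and let $\seq{e} = (e_i)_{i\in[n]} \in [|\seq{s}|]^n$ be an embedding of $\seq{p}$ in $\seq{s}$. By Definition \ref{def:NSP_embedding}, regardless of whether $\seq{e}$ is a $\softemb$- or $\strictemb$-embedding and regardless of whether $\gennoninclrel$ is taken to be $\partialnoninclrel$ or $\totalnoninclrel$, the tuple $\seq{e}$ satisfies (i) $p_i \subseteq s_{e_i}$ for all $i\in[n]$, and (ii) it is increasing, \ie $e_i < e_{i+1}$ for all $i \in [n-1]$ (by the footnote in Definition \ref{def:NSP_embedding}).

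Next, I would compare these two conditions to Definition \ref{def:positivepattern_embedding} of a positive pattern embedding. That definition requires exactly (i) $p_i \subseteq s_{e_i}$ for all $i\in[n]$ and (ii) $e_i < e_{i+1}$ for all $i\in[n-1]$. Both conditions are already established, so $\seq{e}$ is an embedding of $\seq{p}^+$ in $\seq{s}$, as desired.

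There is really no obstacle to speak of: the proof amounts to observing that the extra constraints in Definition \ref{def:NSP_embedding} (the non-inclusion conditions on the $q_i$) are supplementary to, rather than in place of, those of Definition \ref{def:positivepattern_embedding}. The statement could perhaps be phrased as a remark, but it is useful as a lemma because it formalises that every negative-pattern embedding restricts to a positive-pattern embedding of the positive part, which is the basic bridge between the two notions used later.
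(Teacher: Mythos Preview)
Your proof is correct and follows essentially the same approach as the paper's own proof: both unfold Definition \ref{def:NSP_embedding} to extract the two conditions $p_i \subseteq s_{e_i}$ and $e_i < e_{i+1}$, and then observe that these are exactly what Definition \ref{def:positivepattern_embedding} requires. Your explicit remark that this holds regardless of the choice of $\softemb$/$\strictemb$ and $\partialnoninclrel$/$\totalnoninclrel$ is a nice touch that the paper's proof leaves implicit.
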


Another point that determines the semantics of negative containment concerns the multiple occurrences of some pattern in a sequence: should at least one or should all occurrences of the pattern positive part in the sequence satisfy the non-inclusion constraints? 

\begin{definition}[Negative pattern occurrence] \label{def:neg_occurrence}
Let $\seq{s}$ be a sequence and $\seq{p}$ be a negative sequential pattern with $\seq{p}^+$ the positive part of $\seq{p}$. \linebreak[4]
For $\gennoninclrel\in\{\totalnoninclrel,\partialnoninclrel\}$ and $\genemb\in\{\softemb,\strictemb\}$,
\begin{itemize}
\item $\seq{p} \weaklycontains^{\gennonincl}_{\genemb} \seq{s}$ denotes that pattern $\seq{p}$ occurs in sequence $\seq{s}$ iff there exists at least one $\genemb$-embedding of $\seq{p}$ in $\seq{s}$ considering the $\gennoninclrel$ non-inclusion.
\item $\seq{p} \stronglycontains^{\gennonincl}_{\genemb} \seq{s}$ denotes that pattern $\seq{p}$ occurs in sequence $\seq{s}$ iff for each embedding $\seq{e}$ of $\seq{p}^+$ in $\seq{s}$, $\seq{e}$ is also a $\genemb$-embedding of $\seq{p}$ in $\seq{s}$ considering the $\gennoninclrel$ non-inclusion, and there exists at least one embedding $\seq{e}$ of $\seq{p}^+$.
\end{itemize}
\end{definition}

Definition \ref{def:neg_occurrence} permits to capture two semantics for negative sequential patterns depending on the occurrences of the positive part: $\seq{p} \stronglycontains^{\gennonincl}_{\genemb} \seq{s}$ states that a negative pattern $\seq{p}$ occurs in a sequence $\seq{s}$ iff there exists at least one occurrence of the positive part of pattern $\seq{p}$ in sequence $\seq{s}$ and \textbf{every} such occurrence satisfies the negative constraints; $\seq{p} \weaklycontains^{\gennonincl}_{\genemb} \seq{s}$ states that $\seq{p}$ occurs in a sequence $\seq{s}$ iff there exists at least one occurrence of the positive part of pattern $\seq{p}$ in sequence $\seq{s}$ and \textbf{at least one} of these occurrences satisfies the negative constraints.

\begin{example}[Strong vs weak occurrence semantics]
Let $\seq{p}=\langle a\ b\ \neg c\  d \rangle$ be a pattern,  $\seq{s_1}=\langle a\ b\ e\ d \rangle$ and $\seq{s_2}=\langle a\ b\ c\ a\ d\ e\ b\ d \rangle$ be two sequences. 
Thus, $\seq{p}^+=\langle a\ b\ d \rangle$ occurs once in $\seq{s_1}$ hence there is no difference for occurrences of $\seq{p}$ in $\seq{s_1}$ under the two semantics.
However, $\seq{p}^+$ occurs four times in $\seq{s_2}$ through embeddings $(1,2,5)$, $(1,2,8)$, $(1,7,8)$ and $(4,7,8)$. The first two occurrences do not satisfy the negative constraint ($\neg c$) but the last two occurrences do.
Under the weak occurrence semantics, pattern $\seq{p}$ occurs in sequence $\seq{s_2}$ whereas it fails to do so under the strong occurrence semantics.
\end{example}

\begin{lemma}
\label{lemma:strictocc_implies_softocc}
Let $\seq{p}$ be an NSP and $\seq{s}$ a sequence.
For $\genemb\in\{\softemb,\strictemb\}$ and $\gennoninclrel\in\{\totalnoninclrel,\partialnoninclrel\}$,
\begin{equation}
\seq{p}\stronglycontains_{\genemb}^{\gennonincl}\seq{s} \implies \seq{p}\weaklycontains_{\genemb}^{\gennonincl}\seq{s}
\end{equation}
\end{lemma}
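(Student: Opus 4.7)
The plan is to unfold both sides directly from Definition \ref{def:neg_occurrence} and observe that the witness embedding is already supplied by the strong-containment hypothesis. No case analysis on $\gennoninclrel$ or $\genemb$ is needed, because the definitions treat these parameters uniformly.

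First, I would assume $\seq{p}\stronglycontains_{\genemb}^{\gennonincl}\seq{s}$. By Definition \ref{def:neg_occurrence}, this yields two facts: (i) there exists at least one embedding $\seq{e}$ of $\seq{p}^+$ in $\seq{s}$, and (ii) every embedding of $\seq{p}^+$ in $\seq{s}$ is in fact a $\genemb$-embedding of $\seq{p}$ in $\seq{s}$ with respect to $\gennoninclrel$. The witness $\seq{e}$ provided by (i) can therefore be fed into (ii), yielding a $\genemb$-embedding of $\seq{p}$ in $\seq{s}$.

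Finally, the existence of this $\genemb$-embedding is exactly the condition for $\seq{p}\weaklycontains_{\genemb}^{\gennonincl}\seq{s}$ in Definition \ref{def:neg_occurrence}, which closes the argument. There is essentially no obstacle here: the statement is a consequence of the logical pattern ``for all \ldots{} plus non-emptiness implies there exists''; the only subtlety worth flagging in the write-up is that the strong semantics explicitly requires existence of at least one embedding of $\seq{p}^+$, which is precisely what prevents the universally quantified clause from being vacuously true and thus guarantees the weak occurrence.
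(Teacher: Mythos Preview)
Your proposal is correct and mirrors the paper's own proof: assume $\seq{p}\stronglycontains_{\genemb}^{\gennonincl}\seq{s}$, take the guaranteed embedding of $\seq{p}^+$, apply the universal clause to obtain a $\genemb$-embedding of $\seq{p}$, and conclude $\seq{p}\weaklycontains_{\genemb}^{\gennonincl}\seq{s}$. The paper's version is just a terser rendition of exactly this argument.
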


\begin{lemma}\label{lemma:notstrict_implies_notsoft}
Let $\seq{p}$ be an NSP and $\seq{s}$ a sequence.
For $\genemb\in\{\softemb,\strictemb\}$,
\begin{eqnarray*}
\seq{p}\weaklycontains_{\genemb}^{\totalnonincl}\seq{s} \implies \seq{p}\weaklycontains_{\genemb}^{\partialnonincl}\seq{s}\\
\seq{p}\stronglycontains_{\genemb}^{\totalnonincl}\seq{s} \implies \seq{p}\stronglycontains_{\genemb}^{\partialnonincl}\seq{s}
\end{eqnarray*}
\end{lemma}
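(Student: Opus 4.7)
The plan is to reduce both implications to a direct application of Lemma~\ref{lemma:non_incl_relation}, observing that the definitions of embedding (Definition~\ref{def:NSP_embedding}) and of occurrence (Definition~\ref{def:neg_occurrence}) depend on the non-inclusion symbol $\gennoninclrel$ only through the constraints involving the negative itemsets $q_i$; the positional conditions $p_i \subseteq s_{e_i}$ and $e_i < e_{i+1}$ are untouched.

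First I would handle the weak case. Assume $\seq{p}\weaklycontains_{\genemb}^{\totalnonincl}\seq{s}$, so fix a $\genemb$-embedding $\seq{e}=(e_i)_{i\in[m]}$ of $\seq{p}$ in $\seq{s}$ under $\totalnoninclrel$. The conditions $p_i \subseteq s_{e_i}$ carry over verbatim, so only the non-inclusion clauses need revisiting. If $\genemb = \softemb$, then for each $i \in [m-1]$ and each $j \in [e_i+1, e_{i+1}-1]$ we have $q_i \totalnoninclrel s_j$, hence by Lemma~\ref{lemma:non_incl_relation} also $q_i \partialnoninclrel s_j$. If $\genemb = \strictemb$, then $q_i \totalnoninclrel \bigcup_{j \in [e_i+1, e_{i+1}-1]} s_j$, and again Lemma~\ref{lemma:non_incl_relation} yields $q_i \partialnoninclrel \bigcup_{j \in [e_i+1, e_{i+1}-1]} s_j$. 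Thus the very same $\seq{e}$ is a $\genemb$-embedding of $\seq{p}$ in $\seq{s}$ under $\partialnoninclrel$, giving $\seq{p}\weaklycontains_{\genemb}^{\partialnonincl}\seq{s}$.

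For the strong case, I would note that the set of embeddings of the positive part $\seq{p}^+$ in $\seq{s}$ (Definition~\ref{def:positivepattern_embedding}) does not refer to non-inclusion at all, so it is the same collection whether we work under $\totalnoninclrel$ or $\partialnoninclrel$. Assume $\seq{p}\stronglycontains_{\genemb}^{\totalnonincl}\seq{s}$: at least one embedding $\seq{e}$ of $\seq{p}^+$ in $\seq{s}$ exists, and every such $\seq{e}$ is a $\genemb$-embedding of $\seq{p}$ under $\totalnoninclrel$. Applying the argument of the weak case to each such $\seq{e}$ individually, every embedding of $\seq{p}^+$ is also a $\genemb$-embedding of $\seq{p}$ under $\partialnoninclrel$, and existence is preserved. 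Hence $\seq{p}\stronglycontains_{\genemb}^{\partialnonincl}\seq{s}$.

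There is essentially no obstacle here: the proof is a direct pointwise lifting of Lemma~\ref{lemma:non_incl_relation} through the definitions. The only subtlety worth stating explicitly is that the universal quantification in the strong semantics ranges over embeddings of $\seq{p}^+$, which are defined independently of $\gennoninclrel$, so we do not need to enlarge or shrink the family of embeddings being quantified over when switching from total to partial non-inclusion.
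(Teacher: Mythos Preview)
Your proposal is correct and follows essentially the same approach as the paper's proof: both reduce the statement to Lemma~\ref{lemma:non_incl_relation} at the itemset level, show that any $(\genemb,\totalnonincl)$-embedding is a $(\genemb,\partialnonincl)$-embedding (handling the soft and strict cases separately), and then lift this to the weak and strong occurrence semantics. Your explicit remark that the set of embeddings of $\seq{p}^+$ is independent of the non-inclusion relation is a nice clarification that the paper leaves implicit.
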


In this section, we have exhibited several semantics that can be associated to negative patterns. This leads to eight different types of pattern occurrences. 
We take $\Theta$ to denote the set of containment relations:
\[\Theta = \left\{
\weaklycontains^{\totalnonincl}_{\softemb}, \weaklycontains^{\totalnonincl}_{\strictemb},
\weaklycontains^{\partialnonincl}_{\softemb}, \weaklycontains^{\partialnonincl}_{\strictemb},
\stronglycontains^{\totalnonincl}_{\softemb}, \stronglycontains^{\totalnonincl}_{\strictemb},
\stronglycontains^{\partialnonincl}_{\softemb}, \stronglycontains^{\partialnonincl}_{\strictemb} 
\right\}
\]

These containment relations allow to disambiguate the semantics of negative pattern containment encountered in the literature. 
Next, Section~\ref{sec:dominance} investigates possible equivalent containment relations in $\Theta$.

\section{Dominance and equivalence between containment relations}\label{sec:dominance}

\begin{definition}[Dominance]\label{def:dominance}
For $\theta,\theta'\in\Theta$, 
$\theta$ \emph{dominates} $\theta'$, denoted $\theta\dom \theta'$, iff $\seq{p} \theta \seq{s} \implies \seq{p} \theta' \seq{s}$ for all $\seq{p}\in \mathcal{N}$ and all sequence $\seq{s}$.

We denote by $\theta\not\dom \theta'$ iff $\theta\dom \theta'$ is false, that is, there is some couple $(\seq{p},\seq{s})$ such that $\seq{p}\theta\seq{s}$ but not $\seq{p}\theta'\seq{s}$.
\end{definition}

The idea behind dominance between two containment relations $\theta$ and $\theta'$ is related to the sequences in which a pattern occurs. 
By definition, if $\theta\dom \theta'$  then a pattern $\seq{p}\in \mathcal{N}$ occurs in a sequence $\seq{s}$ according to the $\theta'$ containment relation whenever $\seq{p}$ occurs in $\seq{s}$ according to the $\theta$ containment relation. 
In the context of pattern mining, this is useful to design algorithms exploiting properties of a dominating containment relation in order to extract efficiently the patterns according to dominated containment relations.

\begin{lemma}\label{lemma:dom_preorder}
The dominance relation $\dom$ is a pre-order.
\end{lemma}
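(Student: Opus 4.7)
The plan is to verify directly the two defining properties of a pre-order, namely reflexivity and transitivity, by unpacking Definition \ref{def:dominance} and appealing to the corresponding properties of the underlying logical implication on the set of pairs $(\seq{p},\seq{s})$.

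First I would handle reflexivity. For any $\theta\in\Theta$, the claim $\theta\dom\theta$ amounts to showing that $\seq{p}\theta\seq{s}\implies\seq{p}\theta\seq{s}$ for every $\seq{p}\in\mathcal{N}$ and every sequence $\seq{s}$, which is immediate since implication is reflexive.

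Next I would handle transitivity. Suppose $\theta\dom\theta'$ and $\theta'\dom\theta''$, and fix an arbitrary pair $(\seq{p},\seq{s})\in\mathcal{N}\times\mathcal{S}$ such that $\seq{p}\theta\seq{s}$. By the first hypothesis and Definition \ref{def:dominance}, this yields $\seq{p}\theta'\seq{s}$, and by the second hypothesis this in turn yields $\seq{p}\theta''\seq{s}$. Since $(\seq{p},\seq{s})$ was arbitrary, we obtain $\theta\dom\theta''$, as required.

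There is no real obstacle here: the lemma is essentially a restatement of the fact that the relation ``implies for every pair'' inherits reflexivity and transitivity from implication itself. The only thing to be careful about is to stick closely to the wording of Definition \ref{def:dominance} (the universal quantification over all $\seq{p}\in\mathcal{N}$ and all sequences $\seq{s}$) so that the chaining of implications in the transitivity step is unambiguous. Note that antisymmetry is not claimed here and, as the subsequent discussion on equivalence between containment relations suggests, would not hold in general, which is precisely why only a pre-order is asserted.
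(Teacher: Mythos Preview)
Your proposal is correct and follows essentially the same approach as the paper: both verify reflexivity directly from Definition~\ref{def:dominance} and establish transitivity by chaining the two implications $\seq{p}\theta\seq{s}\Rightarrow\seq{p}\theta'\seq{s}$ and $\seq{p}\theta'\seq{s}\Rightarrow\seq{p}\theta''\seq{s}$ over all pairs $(\seq{p},\seq{s})$. Your remark about antisymmetry failing (hence only a pre-order) is a nice addition that the paper's proof does not spell out.
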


\begin{definition}[Equivalent containment relations]
For $\theta,\theta'\in\Theta$, $\theta$ is equivalent to $\theta'$, denoted $\theta \sim\theta'$ iff $\theta\dom\theta'$ and $\theta'\dom\theta$.
\end{definition}

\begin{lemma}\label{lemma:equivalent_relation}
$\sim$ is an equivalence relation on $\Theta$.
\end{lemma}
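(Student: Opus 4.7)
The plan is to verify the three defining properties of an equivalence relation (reflexivity, symmetry, transitivity) for $\sim$, leveraging the fact that Lemma \ref{lemma:dom_preorder} has already established $\dom$ as a pre-order, i.e., that $\dom$ is reflexive and transitive. This is essentially the standard fact that the ``two-way'' relation induced by any pre-order is an equivalence relation, so the proof reduces to repeatedly unfolding the definition $\theta \sim \theta' \Leftrightarrow (\theta \dom \theta' \text{ and } \theta' \dom \theta)$.

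First, I would check reflexivity: for any $\theta \in \Theta$, since $\dom$ is reflexive (part of Lemma \ref{lemma:dom_preorder}), we have $\theta \dom \theta$, and conjoining this with itself yields $\theta \sim \theta$. Next, for symmetry, I would observe that the definition of $\sim$ is manifestly symmetric in $\theta$ and $\theta'$: if $\theta \sim \theta'$ then by definition $\theta \dom \theta'$ and $\theta' \dom \theta$, which (as a conjunction) equally entails $\theta' \dom \theta$ and $\theta \dom \theta'$, i.e., $\theta' \sim \theta$.

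Finally, for transitivity, I would suppose $\theta \sim \theta'$ and $\theta' \sim \theta''$. Unfolding, this gives the four statements $\theta \dom \theta'$, $\theta' \dom \theta$, $\theta' \dom \theta''$, and $\theta'' \dom \theta'$. Applying transitivity of $\dom$ (again from Lemma \ref{lemma:dom_preorder}) to the first and third yields $\theta \dom \theta''$, and applying it to the fourth and second yields $\theta'' \dom \theta$. Together these give $\theta \sim \theta''$, completing the proof.

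There is no real obstacle here: every step is a direct consequence of the pre-order properties of $\dom$ and the symmetric form of the definition of $\sim$. The only subtlety worth stating explicitly in the write-up is that all three properties rely on Lemma \ref{lemma:dom_preorder} rather than on any semantic details of the eight relations in $\Theta$, so the argument is entirely structural.
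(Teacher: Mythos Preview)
Your proposal is correct and mirrors the paper's own proof almost exactly: both verify reflexivity, symmetry, and transitivity of $\sim$ directly from the pre-order properties of $\dom$ established in Lemma~\ref{lemma:dom_preorder} and the symmetric form of the definition of $\sim$. There is nothing to add.
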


Two equivalent containment relations have equivalent semantics, in the following sense: the sets of sequences in which a given pattern occurs are the same and, reciprocally, the sets of negative patterns that occur in a sequence are the same when considering these two containment relations. 

We now study the dominance relations that hold between the elements of $\Theta$.

\begin{proposition}
\label{prop:dominances}
The following dominance statements between containment relations hold:
\begin{gather}
\stronglycontains^{\gennonincl}_{\genemb} \; \dom \; \weaklycontains^{\gennonincl}_{\genemb} \label{eq:dominance3}\\
\weaklycontains^{\gennonincl}_{\strictemb} \; \dom \; \weaklycontains^{\gennonincl}_{\softemb} \text{ and } \stronglycontains^{\gennonincl}_{\strictemb} \; \dom \; \stronglycontains^{\gennonincl}_{\softemb}  \label{eq:dominance1}\\
\weaklycontains^{\totalnonincl}_{\softemb} \; \dom \; \weaklycontains^{\totalnonincl}_{\strictemb} \text{ and } \stronglycontains^{\totalnonincl}_{\softemb} \; \dom \; \stronglycontains^{\totalnonincl}_{\strictemb} \label{eq:dominance2}\\
\weaklycontains^{\totalnonincl}_{\genemb} \; \dom \; \weaklycontains^{\partialnonincl}_{\genemb} \text{ and } \stronglycontains^{\totalnonincl}_{\genemb} \; \dom \; \stronglycontains^{\partialnonincl}_{\genemb} \label{eq:dominance4}
\end{gather}
and the following non-dominance statements hold:\\[-1ex]
{\begin{gather}
\weaklycontains^{\partialnonincl}_{\genemb} \; \not\dom \; \weaklycontains^{\totalnonincl}_{\genemb} \text{ and } \stronglycontains^{\partialnonincl}_{\genemb} \; \not\dom \; \stronglycontains^{\totalnonincl}_{\genemb}  \label{eq:nondominance1}\\
\weaklycontains^{\gennonincl}_{\genemb} \; \not\dom \; \stronglycontains^{\gennonincl}_{\genemb} \label{eq:nondominance2}\\
\weaklycontains^{\partialnonincl}_{\softemb} \; \not\dom \; \weaklycontains^{\partialnonincl}_{\strictemb} \text{ and } \stronglycontains^{\partialnonincl}_{\softemb} \; \not\dom \; \stronglycontains^{\partialnonincl}_{\strictemb} \label{eq:nondominance3}\\
\weaklycontains^{\partialnonincl}_{\strictemb} \; \not\dom \; \stronglycontains^{\partialnonincl}_{\softemb} \label{eq:nondominance4}
\\
\stronglycontains^{\partialnonincl}_{\softemb} \; \not\dom \;\weaklycontains^{\partialnonincl}_{\strictemb} \label{eq:nondominance5}\\
\weaklycontains^{\gennonincl}_{\strictemb} \; \not\dom \;\stronglycontains^{\gennonincl'}_{\strictemb} \label{eq:nondominance6}\\
\weaklycontains^{\totalnonincl}_{\softemb} \; \not\dom \;\stronglycontains^{\partialnonincl}_{\softemb} \label{eq:nondominance7}\\
\stronglycontains^{\partialnonincl}_{\strictemb} \; \not\dom \;\weaklycontains^{\totalnonincl}_{\strictemb} \label{eq:nondominance8}
\end{gather}}
where $\gennoninclrel\in\left\{\totalnoninclrel,\partialnoninclrel\right\}$ and $\genemb\in\left\{\softemb,\strictemb\right\}$.
\end{proposition}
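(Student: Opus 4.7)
The plan is to handle the positive dominance statements first (they reduce to lemmas already proved) and then to construct explicit counterexamples for each non-dominance statement, re-using a small library of pairs $(\seq{p},\seq{s})$.

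For the four dominance statements: \eqref{eq:dominance3} is a direct restatement of Lemma \ref{lemma:strictocc_implies_softocc}; \eqref{eq:dominance4} is Lemma \ref{lemma:notstrict_implies_notsoft}; \eqref{eq:dominance1} follows from Lemma \ref{lemma:bulletimpliescirc} by transferring the pointwise implication ``strict embedding $\Rightarrow$ soft embedding'' through the existential quantifier of $\weaklycontains$ and through the universal quantifier over positive-part embeddings of $\stronglycontains$; \eqref{eq:dominance2} follows from Lemma \ref{prop:sqsubset_eqembeddings}, which collapses strict and soft in the total case so that dominance holds in either direction.

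For the non-dominance statements, three pairs suffice. First, $\seq{p}=\langle a\ \neg(bc)\ d\rangle$ against $\seq{s}=\langle a\ b\ d\rangle$ has a unique positive embedding whose between-interval consists of the single itemset $\{b\}$: partial non-inclusion of $\{b,c\}$ in $\{b\}$ succeeds (through $c$) while total non-inclusion fails. This single pair witnesses \eqref{eq:nondominance1} for either embedding flavour and also \eqref{eq:nondominance8}. Second, the same $\seq{p}$ against $\seq{s}=\langle a\ b\ c\ d\rangle$ has a unique positive embedding whose between-interval contains two itemsets $\{b\}$ and $\{c\}$: the soft check succeeds itemset-by-itemset (the missing item of $\{b,c\}$ differs in each), while the strict check on the union $\{b,c\}$ fails; uniqueness of the positive embedding makes $\weaklycontains$ and $\stronglycontains$ coincide, so this pair witnesses both \eqref{eq:nondominance3} and \eqref{eq:nondominance5}. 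Third, the singleton-negative example $\seq{p}=\langle a\ \neg b\ c\rangle$ against $\seq{s}=\langle a\ b\ c\ a\ c\rangle$ has positive-part embeddings $(1,3)$, $(1,5)$, $(4,5)$, only the last two avoiding $b$. By Lemma \ref{lemma:soft_implies_strict}, strict and soft coincide on singleton-negative patterns, which makes the weak/strong axis the only source of separation; the example simultaneously witnesses \eqref{eq:nondominance2}, \eqref{eq:nondominance4}, \eqref{eq:nondominance6} (across all choices of $\gennoninclrel,\gennoninclrel'\in\{\totalnoninclrel,\partialnoninclrel\}$), and \eqref{eq:nondominance7}.

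The main obstacle is \eqref{eq:nondominance6}--\eqref{eq:nondominance8}, where dominance must be broken on the embedding axis and the non-inclusion axis simultaneously. The strategy above neutralises one axis per example: singleton negative itemsets make strict equal to soft, collapsing the embedding axis and isolating the weak/strong distinction; multi-item negative itemsets, placed all in one sequence itemset or split across consecutive itemsets, let us control the partial/total and soft/strict axes independently. Once the examples are fixed, verification of each claim is routine: list the positive-part embeddings in $\seq{s}$, compute either the union or each individual itemset between successive positive positions, and check which of the four interval conditions holds in which embedding.
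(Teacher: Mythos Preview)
Your proof is correct and follows essentially the same route as the paper: invoke Lemmas~\ref{lemma:bulletimpliescirc}--\ref{lemma:notstrict_implies_notsoft} for the four dominance statements, then exhibit small explicit $(\seq{p},\seq{s})$ pairs for the non-dominances. Your library of three pairs is slightly more economical than the paper's four (you absorb \eqref{eq:nondominance4} into your singleton-negative example, whereas the paper uses a separate pair $\seq{p}=\langle a\ \neg(bc)\ d\rangle$, $\seq{s}=\langle a\ b\ d\ c\ d\rangle$); one small slip to fix is that in your third example $\seq{s}=\langle a\ b\ c\ a\ c\rangle$ the embedding $(1,5)$ does \emph{not} avoid $b$ (position~2 is $\{b\}$), so only $(4,5)$ satisfies the negative constraint---but this does not affect your argument, since one satisfying embedding and one failing embedding are all you need.
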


Proposition \ref{prop:dominances} gathers results from Section \ref{sec:negpatterns:semantics}. 
Each line expresses several relationships between pairs of containment relations. Equations \ref{eq:dominance1}-\ref{eq:dominance4} are dominance statements deduced from Lemmas \ref{lemma:bulletimpliescirc}, \ref{prop:sqsubset_eqembeddings}, \ref{lemma:strictocc_implies_softocc} and \ref{lemma:notstrict_implies_notsoft}. Equations \ref{eq:nondominance1}-\ref{eq:nondominance3} state the absence of dominance for which we can exhibit counterexamples. In addition, many other dominance and non dominance relationships can be deduced from Proposition \ref{prop:dominances} using transitivity of dominance (Lemma \ref{lemma:dom_preorder}). 
Table \ref{tab:dominances} summarizes them. 

\begin{table*}
\caption{Dominance. $\dom$ (resp.\ $\nodom$) means that the semantics at the left of the row dominates (resp.\ does not dominate) the semantics at the top of the column.}
\label{tab:dominances}

\centering
\begin{tabular}{c@{\hspace*{2em}}c@{\hspace*{2em}}c@{\hspace*{2em}}c@{\hspace*{2em}}c@{\hspace*{2em}}c@{\hspace*{2em}}c@{\hspace*{2em}}c@{\hspace*{2em}}c}
\toprule
& $\stronglycontains^{\partialnonincl}_{\strictemb}$ &
$\weaklycontains^{\partialnonincl}_{\strictemb}$ &
$\stronglycontains^{\partialnonincl}_{\softemb}$ &
$\weaklycontains^{\partialnonincl}_{\softemb}$ &
$\stronglycontains^{\totalnonincl}_{\strictemb}$ &
$\weaklycontains^{\totalnonincl}_{\strictemb}$ &
$\stronglycontains^{\totalnonincl}_{\softemb}$ &
$\weaklycontains^{\totalnonincl}_{\softemb}$ \\[-0.25ex]
 \midrule
$\stronglycontains^{\partialnonincl}_{\strictemb}$ &
 $\cdot$ & $\dom$ & $\dom$ & $\dom$ & $\nodom$ & $\nodom$ & $\nodom$ & $\nodom$ \\[1ex] 
$\weaklycontains^{\partialnonincl}_{\strictemb}$ &
 $\nodom$ & $\cdot$ & $\nodom$ & $\dom$ & $\nodom$ & $\nodom$ & $\nodom$ & $\nodom$ \\[1ex] 
$\stronglycontains^{\partialnonincl}_{\softemb}$ &
 $\nodom$ & $\nodom$ & $\cdot$ & $\dom$ & $\nodom$ & $\nodom$ & $\nodom$ & $\nodom$\\[1ex] 
$\weaklycontains^{\partialnonincl}_{\softemb}$ &
 $\nodom$ & $\nodom$ & $\nodom$ & $\cdot$ & $\nodom$ &$\nodom$ & $\nodom$ & $\nodom$\\[1ex] 
$\stronglycontains^{\totalnonincl}_{\strictemb}$ &
 $\dom$ & $\dom$ & $\dom$ & $\dom$ & $\cdot$ & $\dom$ & $\dom$ & $\dom$ \\[1ex]
$\weaklycontains^{\totalnonincl}_{\strictemb}$ &
 $\nodom$ & $\dom$ & $\nodom$ & $\dom$ & $\nodom$ & $\cdot$ & $\nodom$ & $\dom$ \\[1ex] 
$\stronglycontains^{\totalnonincl}_{\softemb}$ &
 $\dom$ & $\dom$ & $\dom$ & $\dom$ & $\dom$ & $\dom$ & $\cdot$ & $\dom$ \\[1ex]
$\weaklycontains^{\totalnonincl}_{\softemb}$ &
 $\nodom$ & $\dom$ & $\nodom$  & $\dom$ & $\nodom$ & $\dom$ & $\nodom$ & $\cdot$ \\ 
\bottomrule
\end{tabular}
\end{table*}

An interesting result in Proposition \ref{prop:dominances} is that there are two pairs of containment relations,  $\left(
\stronglycontains^{\totalnonincl}_{\softemb}, \stronglycontains^{\totalnonincl}_{\strictemb}\right)$ and $\left(\weaklycontains^{\totalnonincl}_{\softemb}, \weaklycontains^{\totalnonincl}_{\strictemb}\right)$, whose two members are equivalent. 
In fact, there are six equivalence classes of containment relations: 
$\left\{\stronglycontains^{\partialnonincl}_{\softemb}\right\}$, $\left\{\stronglycontains^{\partialnonincl}_{\strictemb}\right\}$,
$\left\{\weaklycontains^{\partialnonincl}_{\softemb}\right\}$, $\left\{\weaklycontains^{\partialnonincl}_{\strictemb}\right\}$,
$\left\{\stronglycontains^{\totalnonincl}_{\softemb}, \stronglycontains^{\totalnonincl}_{\strictemb}\right\}$ and $\left\{\weaklycontains^{\totalnonincl}_{\softemb}, \weaklycontains^{\totalnonincl}_{\strictemb}\right\}$.
Figure \ref{fig:dominances} illustrates the dominance relation on the quotient set $\Theta/\!\!\sim$.

We can finally point out that Lemma \ref{lemma:soft_implies_strict} adds a dominance relationship for the case that, in negative sequential patterns, negative itemsets are restricted to be singleton sets. In this case, the equivalence classes become: 
$\left\{\weaklycontains^{\totalnonincl}_{\softemb}, \weaklycontains^{\totalnonincl}_{\strictemb}\right\}$,
$\left\{\weaklycontains^{\partialnonincl}_{\softemb},\weaklycontains^{\partialnonincl}_{\strictemb}\right\}$,
$\left\{\stronglycontains^{\totalnonincl}_{\softemb}, \stronglycontains^{\totalnonincl}_{\strictemb}\right\}$ 
and $\left\{\stronglycontains^{\partialnonincl}_{\softemb}, \stronglycontains^{\partialnonincl}_{\strictemb}\right\}$.
Figure \ref{fig:dominances} illustrates the dominance relation on the quotient set \mbox{$\Theta/\!\!\sim$} in this specific case. 

\begin{figure}[tbh]
\caption[Dominance]{Dominance between containment relations. The labels for edges refer to the corresponding equations in Proposition \ref{prop:dominances}.\hspace*{0.5em}Dominance goes from top to bottom i.e.\ $\stronglycontains^{\totalnonincl}_{\softemb}$ as well as $\stronglycontains^{\totalnonincl}_{\strictemb}$ dominate all other containment relations.}
\label{fig:dominances}
\centering
\scalebox{0.9}{
\begin{tikzpicture}  
    \node (n0) at (2,10) {{\Large$\stronglycontains^{\totalnonincl}_{\softemb} \enskip\sim\enskip \stronglycontains^{\totalnonincl}_{\strictemb}$}};
    \node (n1) at (0,8) {{\Large$\weaklycontains^{\totalnonincl}_{\softemb} \enskip\sim\enskip \weaklycontains^{\totalnonincl}_{\strictemb}$}};
    \node (n2) at (2,6) {{\Large$\weaklycontains^{\partialnonincl}_{\strictemb}$}};
    \node (n3) at (2,3.5) {{\Large$\weaklycontains^{\partialnonincl}_{\softemb}$}}; 
    \node (n4) at (4,7.75) {{\Large$\stronglycontains^{\partialnonincl}_{\strictemb}$}}; 
    \node (n5) at (4,5) {{\Large$\stronglycontains^{\partialnonincl}_{\softemb}$}};

\draw (n0) to node[left] {(\ref{eq:dominance3})$\enskip$} (n1);
\draw (n1) to node[left] {(\ref{eq:dominance4})$\enskip$} (n2);
\draw (n2) to node[left] {(\ref{eq:dominance1})} (n3);
\draw (n0) to node[right] {(\ref{eq:dominance4})} (n4);
\draw (n4) to node[right] {(\ref{eq:dominance1})} (n5);
\draw (n4) to node[right] {$\enskip$(\ref{eq:dominance3})} (n2);
\draw (n5) to node[right] {$\enskip$(\ref{eq:dominance3})} (n3);

\end{tikzpicture}
}
\end{figure}

\begin{figure}[tbh]
\caption[Dominance2]{Dominance between containment relations for the case that, in negative sequential patterns, negative itemsets are restricted to be singleton sets. The labels for edges again refer to the equations given in Proposition \ref{prop:dominances}.}
\label{fig:dominances_withoutitemsets}
\centering
\scalebox{0.9}{
\begin{tikzpicture}  
    \node (n0) at (2,10) {{\Large$\stronglycontains^{\totalnonincl}_{\softemb} \enskip\sim\enskip \stronglycontains^{\totalnonincl}_{\strictemb}$}};
    \node (n1) at (0,8) {{\Large$\weaklycontains^{\totalnonincl}_{\softemb} \enskip\sim\enskip \weaklycontains^{\totalnonincl}_{\strictemb}$}};
    \node (n2) at (2,6) {{\Large$\weaklycontains^{\partialnonincl}_{\strictemb} \enskip\sim\enskip \weaklycontains^{\partialnonincl}_{\softemb}$}};
    \node (n4) at (4,7.75) {{\Large$\stronglycontains^{\partialnonincl}_{\strictemb} \enskip\sim\enskip \stronglycontains^{\partialnonincl}_{\softemb}$}}; 

\draw (n0) to node[left] {(\ref{eq:dominance3})$\enskip$} (n1);
\draw (n1) to node[left] {(\ref{eq:dominance4})$\enskip$} (n2);
\draw (n0) to node[right] {(\ref{eq:dominance4})} (n4);
\draw (n4) to node[right] {$\enskip$(\ref{eq:dominance3})} (n2);

\end{tikzpicture}
}
\end{figure}

\section{Anti-monotonicity}
It is now time to check whether there are containment relations that enjoy interesting properties. 
In our initial context of mining frequent negative sequential patterns, we investigate anti-monotonicity properties.

According to Wang et \al \cite{Wang2019}, ``the downward property (\ldots) does not hold in negative sequential analysis''. The ``downward property'' denotes the anti-monotonicity property. We will see that this assertion is actually false with some semantics.

Anti-monotonicity makes sense only with a partial order on the set of NSPs. 
We first introduce different possible partial orders and then we introduce anti-monotonicity.

In the remaining of the section, non-inclusion of itemsets is total non-inclusion, $\totalnoninclrel$.
Thus, we can count on the anti-monotonicity of non-inclusion of itemsets: $q\subseteq q' \implies \forall p\in\mathcal{I},\ (q' \totalnoninclrel p \Rightarrow q \totalnoninclrel p)$ for all itemsets $q,q'$.

\subsection{Partial orders}
Definition~\ref{def:nsp_relations} introduces three relations between negative sequential patterns that are partial orders (see Proposition~\ref{prop:partialorders}).

\begin{definition}[NSP relations]\label{def:nsp_relations}
Consider two NSPs $\seq{p} = \langle p_1 $ $\neg q_1\ p_2\ \neg q_2\ \cdots\  p_{k-1}\ \neg q_{k-1}\ p_{k}\rangle$ and $\seq{p}' =\langle p'_1\ \neg q'_1\ p'_2\ \neg q'_2\ \cdots $ $ p'_{k'-1}\ \neg q'_{k'-1}\ p'_{k'}\rangle$. 

By definition, $\seq{p} \nspincl \seq{p}'$ iff $k\leq k'$ and there exists an increasing\cref{fn:increasing} tuple $(u_i)_{i\in[k]}\in[k']^k$ and:
\begin{enumerate}
\item $\forall i \in [k],\; p_i\subseteq p'_{u_i}$
\item $\forall i \in [k-1],\; q_i\subseteq \bigcup_{j\in[u_i,u_{i+1}-1]} q'_{j}$
\item $k= k' \implies \exists j \in[k],\; p_j \neq p'_{j}$ or $\exists j \in[k-1],\; q_j \neq q'_{j}$
\end{enumerate}
by definition, $\seq{p}\lhd\seq{p}'$ iff $k\leq k'$ and:
\begin{enumerate}
\item $\forall i \in [k],\; p_i\subseteq p'_i$
\item $\forall i \in [k-1],\; q_i\subseteq q'_i$
\item $k= k' \implies p_k\neq p'_{k}$ or $\exists j \in [k-1]$ s.t.\ $q_j\neq q'_j$
\end{enumerate}
and, by definition, $\seq{p}\nspinclplus\seq{p}'$ iff $k = k'$ and:
\begin{enumerate}
\item $\forall i \in [k],\; p_i= p'_i$
\item $\forall i \in [k-1],\; q_i\subseteq q'_i$
\item $\exists j \in [k-1]$ s.t.\ $q_j\neq q'_j$
\end{enumerate}
\end{definition}

The $\nspincl$ relation can be seen as the ``classical'' inclusion relation between sequential patterns \cite{Mooney:2013}. 
An NSP $\seq{p}$ is less specific than $\seq{p}'$ iff $\seq{p}^+$ is a subsequence of $\seq{p}'^+$ and negative constraints are satisfied. 
The main difference with $\lhd$ is that $\nspincl$ permits to insert new positive itemsets in the middle of the sequence while $\lhd$ permits only insertion of new positive itemsets at the end.\footnote{In sequential pattern mining, it is called a \textit{backward}-extension of the patterns.}$^,$\footnote{We remind that, by Definition \ref{def:negativepattern}, $p_i\neq\emptyset$ and that we never have two successive negative itemsets in an NSP.} 
Nonetheless, it is still possible to insert items to the positive itemsets. 
The $\nspinclplus$ does not even permit such differences: for two NSPs to be comparable via $\nspinclplus$, they must have the same positive itemsets.

\begin{lemma}\label{lemma:partialorders_relations}
For $\seq{p},\seq{p}'\in\mathcal{N}$, 
\begin{equation}
\seq{p}\nspinclplus\seq{p}' \implies \seq{p}\lhd\seq{p}' \implies \seq{p}\nspincl\seq{p}'
\end{equation}
\end{lemma}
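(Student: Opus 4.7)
The proof is just a matter of unfolding the three definitions and checking each clause. The strongest relation $\nspinclplus$ is essentially ``same positive structure, enlarged negative itemsets,'' $\lhd$ is ``positional refinement'' (possibly extended at the right end), and $\nspincl$ is the general ``subsequence with witnessing tuple $(u_i)$.'' So the plan is to show that the witnesses demanded by each weaker relation can be supplied trivially from the data granted by the stronger one.

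For the first implication $\seq{p}\nspinclplus\seq{p}' \Rightarrow \seq{p}\lhd\seq{p}'$, I would simply verify the three clauses of $\lhd$. From $\nspinclplus$ we have $k=k'$ (so $k\leq k'$), $p_i = p'_i$ (so in particular $p_i\subseteq p'_i$), and $q_i \subseteq q'_i$ for $i\in[k-1]$, which gives clauses (1) and (2) of $\lhd$. The strictness clause (3) of $\lhd$ under $k=k'$ requires $p_k\neq p'_k$ or some $q_j\neq q'_j$; the latter is explicitly furnished by clause (3) of $\nspinclplus$.

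For the second implication $\seq{p}\lhd\seq{p}' \Rightarrow \seq{p}\nspincl\seq{p}'$, I would exhibit the canonical tuple $u_i := i$ for $i\in[k]$. Since $k\leq k'$, $(u_i)_{i\in[k]}$ is an increasing tuple in $[k']^k$. Clause (1) of $\nspincl$ becomes $p_i\subseteq p'_{u_i} = p'_i$, which is given by $\lhd$. Clause (2) needs $q_i\subseteq \bigcup_{j\in[u_i,u_{i+1}-1]} q'_j$; with $u_i=i$ and $u_{i+1}=i+1$, the index set $[u_i,u_{i+1}-1]$ collapses to the singleton $\{i\}$, so the requirement reduces to $q_i\subseteq q'_i$, again granted by $\lhd$. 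For the strictness clause (3) under $k=k'$, whatever witness $\lhd$ provides --- either $p_k\neq p'_k$ or some $q_j\neq q'_j$ with $j\in[k-1]$ --- directly serves as a witness for $\nspincl$.

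The only point that deserves any real attention is the collapse $[u_i,u_{i+1}-1]=\{i\}$ in clause (2) of $\nspincl$ under the canonical choice $u_i=i$; once this is noted, both implications reduce to pure bookkeeping against the definitions, so I do not expect any genuine obstacle.
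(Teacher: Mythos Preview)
Your proof is correct and follows essentially the same route as the paper: for $\nspinclplus\Rightarrow\lhd$ you weaken equality to inclusion and reuse the strictness witness, and for $\lhd\Rightarrow\nspincl$ you take the canonical tuple $u_i=i$ and observe that $[u_i,u_{i+1}-1]=\{i\}$, exactly as the paper does. The only cosmetic difference is that the paper spells out the case split in clause~(3) of $\nspincl$ ($p_k\neq p'_k$ versus some $q_j\neq q'_j$), which you handle in one sentence.
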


\begin{proposition}[Strict partial orders]\label{prop:partialorders}
 $\nspincl$, $\lhd$ and $\nspinclplus$  are partial orders on $\mathcal{N}$.
\end{proposition}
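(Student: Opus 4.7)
My plan is to establish, for each of the three relations, the two defining properties of a strict partial order: irreflexivity and transitivity. Irreflexivity is the easy half in all three cases: each of the three definitions has a clause (point 3) which, when $k = k'$, requires that at least one positive or negative itemset differ between $\seq{p}$ and $\seq{p}'$; instantiating $\seq{p}' = \seq{p}$ and choosing the identity tuple $u_i = i$ in the $\nspincl$ case shows that this disjunction cannot hold when comparing a pattern to itself, so $\seq{p}\,\text{rel}\,\seq{p}$ is impossible.

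For transitivity, I would handle the three relations in increasing order of difficulty. For $\nspinclplus$, if $\seq{p}\nspinclplus\seq{p}'$ and $\seq{p}'\nspinclplus\seq{p}''$, then all three patterns have equal length and equal positive itemsets, while $q_i\subseteq q'_i\subseteq q''_i$ by transitivity of $\subseteq$; any index $j$ with $q_j\neq q'_j$ gives $q_j\subsetneq q'_j\subseteq q''_j$, hence $q_j\neq q''_j$, discharging condition~3. For $\lhd$, with $k\leq k'\leq k''$, transitivity of $\subseteq$ yields $p_i\subseteq p''_i$ and $q_i\subseteq q''_i$; if the composed lengths are equal, all three lengths collapse to $k$ and the same ``strict inclusion propagates'' argument used for $\nspinclplus$ discharges condition~3 (for either the last positive itemset or some intermediate negative itemset).

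The main obstacle lies in transitivity for $\nspincl$, because the defining tuples are not index-preserving and condition~2 involves unions of negative itemsets over index ranges of varying size. Given $\seq{p}\nspincl\seq{p}'$ via the tuple $(u_i)_{i\in[k]}\in[k']^k$ and $\seq{p}'\nspincl\seq{p}''$ via $(u'_j)_{j\in[k']}\in[k'']^{k'}$, my composed witness is $v_i := u'_{u_i}$, which is increasing because both component tuples are increasing. Condition~1 then follows by chaining $p_i\subseteq p'_{u_i}\subseteq p''_{u'_{u_i}} = p''_{v_i}$. For condition~2, I would compute
\begin{equation*}
q_i \;\subseteq\; \bigcup_{j\in[u_i,u_{i+1}-1]} q'_j \;\subseteq\; \bigcup_{j\in[u_i,u_{i+1}-1]}\;\bigcup_{\ell\in[u'_j,u'_{j+1}-1]} q''_\ell \;=\; \bigcup_{\ell\in[u'_{u_i},\,u'_{u_{i+1}}-1]} q''_\ell \;=\; \bigcup_{\ell\in[v_i,v_{i+1}-1]} q''_\ell,
\end{equation*}
where the telescoping of the double union relies on the fact that $j$ ranges over a contiguous interval and $u'$ is strictly increasing. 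Condition~3 is handled by the observation that $k = k''$ forces $k = k' = k''$ (since $k\leq k'\leq k''$), so both tuples are identities and the propagation-of-strict-inclusion argument applies again.

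Finally, I would close by invoking Lemma~\ref{lemma:partialorders_relations} only as a sanity check: the chain $\nspinclplus\Rightarrow\lhd\Rightarrow\nspincl$ is consistent with irreflexivity and transitivity being preserved when restricting to a sub-relation, but the three irreflexivity/transitivity proofs above are independent and self-contained. The mildly delicate step to double-check is the union-of-intervals identity used in condition~2 of the $\nspincl$ case, for which I would explicitly note that the disjoint union of $[u'_j,u'_{j+1}-1]$ for $j$ ranging over $[u_i,u_{i+1}-1]$ is exactly $[u'_{u_i},u'_{u_{i+1}}-1]$.
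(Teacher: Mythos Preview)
Your proposal is correct and follows essentially the same route as the paper: the composed witness $v_i=u'_{u_i}$ and the telescoping of the contiguous intervals $[u'_j,u'_{j+1}-1]$ are exactly the paper's argument for transitivity of $\nspincl$, and the other two relations are handled identically via transitivity of $\subseteq$ and propagation of a strict inclusion. The only structural differences are cosmetic: the paper additionally proves asymmetry for $\nspincl$ (which, as you implicitly recognise, is redundant once irreflexivity and transitivity are in hand) and invokes Lemma~\ref{lemma:partialorders_relations} to transfer irreflexivity from $\nspincl$ down to $\lhd$ and $\nspinclplus$, whereas you verify each relation directly.
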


We can notice that the third conditions in Definition \ref{def:nsp_relations} enforce the relations to be irreflexive. Removing these conditions enables to define non-strict partial orders.

\subsection{Anti-monotonicity}

Let us first define the anti-monotonicity property of a  containment relation $\theta\in\Theta$ considering a strict partial order $\ltimes\in\{\lhd,\nspincl,\nspinclplus\}$. 

\begin{definition}[Anti-monotonicity on $(\mathcal{N},\ltimes)$]\label{def:antimonotonicity}
Let $\theta\in\Theta$ be a containment relation,
$\theta$ is anti-monotonic on $(\mathcal{N},\ltimes)$ iff for all $\seq{p},\seq{p}'\in \mathcal{N}$ and all sequences $\seq{s}$:
$$\seq{p}\ltimes\seq{p}' \implies \left(\seq{p}'\theta \seq{s} \implies \seq{p}\theta \seq{s}\right)$$
\end{definition}

First of all, we provide an example showing that none of the containment relations is 
anti-monotonic on $(\mathcal{N},\nspincl)$.
Let $\seq{p}=\langle b\ \neg c \ a\rangle$, $\seq{p}'=\langle b\ \neg c \ d\ a\rangle$ and $\seq{s}=\langle b\ e\ d\ c\ a\rangle$. 
Then, we have $\seq{p}\nspincl \seq{p}'$.\footnote{In this case, we do not have $\seq{p}\lhd \seq{p}'$ nor $\seq{p}\nspinclplus \seq{p}'$}
Nonetheless, for each $\theta\in\Theta$, $\seq{p}'\theta\seq{s}$ but it is false that $\seq{p}\theta\seq{s}$. 

In fact, the presence of the item $d$ in the sequence changes the scope for checking the absence of $c$. 
This example is similar to the one used by Zheng et \al \cite{zheng:2009:negative} to state that anti-monotonic property does not hold for negative sequential patterns.
Nonetheless, the anti-monotonicity property holds in case the partial order prevents from changing the scope for absent items.

\begin{proposition}\label{prop:antimon_lhd}
$\weaklycontains^{\totalnonincl}_{\softemb}$ and $\weaklycontains^{\totalnonincl}_{\strictemb}$ are anti-monotonic on $(\mathcal{N},\lhd)$.
\end{proposition}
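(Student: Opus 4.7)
The plan is to leverage Lemma \ref{prop:sqsubset_eqembeddings}, which tells us that under total non-inclusion the notions of $\softemb$-embedding and $\strictemb$-embedding coincide. Consequently $\weaklycontains^{\totalnonincl}_{\softemb}$ and $\weaklycontains^{\totalnonincl}_{\strictemb}$ are the same relation, so it suffices to prove anti-monotonicity for, say, $\weaklycontains^{\totalnonincl}_{\softemb}$.

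Fix $\seq{p},\seq{p}'\in\mathcal{N}$ with $\seq{p}\lhd\seq{p}'$ and a sequence $\seq{s}$ such that $\seq{p}'\weaklycontains^{\totalnonincl}_{\softemb}\seq{s}$. Writing $\seq{p}=\langle p_1\ \neg q_1\ \cdots\ p_k\rangle$ and $\seq{p}'=\langle p'_1\ \neg q'_1\ \cdots\ p'_{k'}\rangle$, the definition of $\lhd$ gives us $k\leq k'$, $p_i\subseteq p'_i$ for $i\in[k]$, and $q_i\subseteq q'_i$ for $i\in[k-1]$. By hypothesis there is a $\softemb$-embedding $\seq{e}'=(e'_1,\dots,e'_{k'})$ of $\seq{p}'$ in $\seq{s}$. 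The key construction is to take the truncation $\seq{e}:=(e'_1,\dots,e'_k)$ and to show it is a $\softemb$-embedding of $\seq{p}$ in $\seq{s}$.

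Three verifications are needed. First, $\seq{e}$ is increasing since $\seq{e}'$ is. Second, for each $i\in[k]$ we have $p_i\subseteq p'_i\subseteq s_{e'_i}=s_{e_i}$, giving the positive containment conditions. Third, for each $i\in[k-1]$ and each $j\in[e_i+1,e_{i+1}-1]=[e'_i+1,e'_{i+1}-1]$, since $i\leq k-1\leq k'-1$ the embedding $\seq{e}'$ yields $q'_i\totalnoninclrel s_j$; now $q_i\subseteq q'_i$ together with the anti-monotonicity of total non-inclusion (recalled explicitly at the start of Section 5) delivers $q_i\totalnoninclrel s_j$. Hence $\seq{e}$ is indeed a $\softemb$-embedding of $\seq{p}$ in $\seq{s}$ under $\totalnoninclrel$, so $\seq{p}\weaklycontains^{\totalnonincl}_{\softemb}\seq{s}$.

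There is no real obstacle here: the only subtle point, and the reason the result fails for $\nspincl$, is that $\lhd$ forces the positive part of $\seq{p}$ to align prefix-wise with the positive part of $\seq{p}'$, so that the intervals $[e_i+1,e_{i+1}-1]$ on which the negative constraints of $\seq{p}$ must be checked coincide with intervals on which the corresponding negative constraints of $\seq{p}'$ were already checked. Under $\nspincl$, by contrast, one could insert extra positive itemsets of $\seq{p}'$ into such an interval, enlarging the scope of absence-checking for $\seq{p}$ and breaking the argument; this is precisely what the counterexample preceding the proposition illustrates.
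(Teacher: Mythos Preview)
Your proof is correct and follows essentially the same approach as the paper: truncate a $\genemb$-embedding of $\seq{p}'$ to its first $k$ coordinates and check that the positive and negative constraints for $\seq{p}$ are inherited via $p_i\subseteq p'_i$, $q_i\subseteq q'_i$, and the anti-monotonicity of $\totalnoninclrel$. The only difference is cosmetic: you invoke Lemma~\ref{prop:sqsubset_eqembeddings} up front to collapse the $\softemb$ and $\strictemb$ cases into one, whereas the paper treats the two cases separately with parallel arguments.
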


Proposition \ref{prop:antimon_lhd} shows that using the $\lhd$ partial order causes anti-monotonicity to hold for containment with weak-occurrence. It is not the case with strong-occurrence, though. 
Let us give a counterexample illustrating what can happen with strong-occurrence. 
Let $\seq{p}=\langle a\ \neg b \ c\rangle$, $\seq{p}'=\langle a\ \neg b \ c\ d\rangle$ and $\seq{s}=\langle a\ c\ d\ a\ b\ c\rangle$. 
Then, we have $\seq{p}\lhd \seq{p}'$.\footnote{In this case, we also have $\seq{p}\nspincl \seq{p}'$ 
(see Lemma \ref{lemma:partialorders_relations}) but not $\seq{p}\nspinclplus \seq{p}'$}
Nonetheless, $\seq{p}'\stronglycontains^{\totalnonincl}_{\genemb}\seq{s}$ holds but it is false that $\seq{p}\stronglycontains^{\totalnonincl}_{\genemb}\seq{s}$. In fact, without the presence of the item $d$ in the pattern, there are three possible embeddings of $\seq{p}$ in $\seq{s}$. For $\stronglycontains^{\totalnonincl}_{\genemb}$ each embedding must satisfy the negation of $b$, which is not the case, but for $\weaklycontains^{\totalnonincl}_{\genemb}$ it is sufficient to have only one embedding satisfying negations.

The previous example illustrates the problem when extending the pattern with additional itemsets. The same issue is encountered with the following example considering patterns of equal length while one pattern has an extended itemset. Let $\seq{p}=\langle a\ \neg b \ c\rangle$, $\seq{p}'=\langle a\ \neg b \ (cd)\rangle$ and $\seq{s}=\langle a\ (cd)\ a\ b\ c\rangle$. Then, we have $\seq{p}\lhd \seq{p}'$. Nonetheless, $\seq{p}'\stronglycontains^{\totalnonincl}_{\genemb}\seq{s}$ holds but it is false that $\seq{p}\stronglycontains^{\totalnonincl}_{\genemb}\seq{s}$.

\begin{proposition}\label{prop:antimon_nspinclplus}
$\weaklycontains^{\totalnonincl}_{\softemb}$, $\weaklycontains^{\totalnonincl}_{\strictemb}$, $\stronglycontains^{\totalnonincl}_{\softemb}$ and $\stronglycontains^{\totalnonincl}_{\strictemb}$ are anti-monotonic on $(\mathcal{N},\nspinclplus)$.
\end{proposition}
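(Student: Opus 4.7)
By Lemma \ref{prop:sqsubset_eqembeddings}, under total non-inclusion $\totalnoninclrel$ the notions of $\softemb$-embedding and $\strictemb$-embedding coincide. Therefore it suffices to establish anti-monotonicity for the two containment relations $\weaklycontains^{\totalnonincl}_{\softemb}$ and $\stronglycontains^{\totalnonincl}_{\softemb}$; the results for $\weaklycontains^{\totalnonincl}_{\strictemb}$ and $\stronglycontains^{\totalnonincl}_{\strictemb}$ follow immediately.

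The whole proof rests on the following key lemma, which I would establish first: if $\seq{p}\nspinclplus\seq{p}'$ and $\seq{e}$ is a $\softemb$-embedding of $\seq{p}'$ in $\seq{s}$ with respect to $\totalnoninclrel$, then $\seq{e}$ is also a $\softemb$-embedding of $\seq{p}$ in $\seq{s}$ with respect to $\totalnoninclrel$. To prove this, unfold Definition \ref{def:NSP_embedding}. Since $\nspinclplus$ forces $k=k'$ and $p_i = p'_i$ for all $i\in[k]$, the positive conditions $p_i\subseteq s_{e_i}$ are inherited verbatim from $\seq{p}'$. For the negative conditions, fix $i\in[k-1]$ and $j\in[e_i+1,e_{i+1}-1]$. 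By assumption, $q'_i \totalnoninclrel s_j$, and by $\nspinclplus$ we have $q_i\subseteq q'_i$; the anti-monotonicity of total non-inclusion recalled in the section (if $q\subseteq q'$ then $q'\totalnoninclrel s_j \Rightarrow q\totalnoninclrel s_j$) then yields $q_i \totalnoninclrel s_j$, with the convention $\emptyset\totalnoninclrel s_j$ handling the case $q_i=\emptyset$.

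For $\weaklycontains^{\totalnonincl}_{\softemb}$: assuming $\seq{p}'\weaklycontains^{\totalnonincl}_{\softemb}\seq{s}$, pick some $\softemb$-embedding $\seq{e}$ of $\seq{p}'$ in $\seq{s}$; by the key lemma $\seq{e}$ is also a $\softemb$-embedding of $\seq{p}$ in $\seq{s}$, so $\seq{p}\weaklycontains^{\totalnonincl}_{\softemb}\seq{s}$. For $\stronglycontains^{\totalnonincl}_{\softemb}$: observe that $\nspinclplus$ implies $\seq{p}^+ = \seq{p}'^+$ (since $p_i=p'_i$ for every $i$), so by Definition \ref{def:positivepattern_embedding} the embeddings of $\seq{p}^+$ in $\seq{s}$ are exactly those of $\seq{p}'^+$ in $\seq{s}$. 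If $\seq{p}'\stronglycontains^{\totalnonincl}_{\softemb}\seq{s}$, then at least one such embedding exists and every one of them is a $\softemb$-embedding of $\seq{p}'$; applying the key lemma to each, every embedding of $\seq{p}^+$ in $\seq{s}$ is a $\softemb$-embedding of $\seq{p}$, hence $\seq{p}\stronglycontains^{\totalnonincl}_{\softemb}\seq{s}$.

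There is no real obstacle here: $\nspinclplus$ is deliberately the most restrictive of the three orders of Definition \ref{def:nsp_relations}, in that it fixes the positive skeleton and only shrinks negative itemsets, so the scope $[e_i+1,e_{i+1}-1]$ over which each $q_i$ must be absent is the same for $\seq{p}$ and $\seq{p}'$. This is precisely the property that fails in the counterexamples to anti-monotonicity on $(\mathcal{N},\nspincl)$ and on $(\mathcal{N},\lhd)$ for strong occurrence, and it is what makes the argument go through uniformly for both weak and strong occurrence under total non-inclusion.
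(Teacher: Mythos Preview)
Your proof is correct and follows essentially the same route as the paper: establish that any $(\totalnonincl)$-embedding of $\seq{p}'$ is one of $\seq{p}$ (via $q_i\subseteq q'_i$ and anti-monotonicity of $\totalnoninclrel$), deduce the weak case directly, and for the strong case exploit $\seq{p}^+=\seq{p}'^+$ so that the sets of positive-part embeddings coincide. Two minor organizational differences are worth noting: you invoke Lemma~\ref{prop:sqsubset_eqembeddings} up front to collapse the $\softemb$ and $\strictemb$ cases into one, whereas the paper treats $\stronglycontains^{\totalnonincl}_{\softemb}$ and $\stronglycontains^{\totalnonincl}_{\strictemb}$ separately; and your strong-occurrence argument is direct, while the paper phrases it by contradiction. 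Both are slight streamlinings of the same underlying argument.
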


We remind that this section was restricted to the case of total non-inclusion ($\totalnoninclrel$) but the results also hold when $\totalnoninclrel$ is replaced by $\partialnoninclrel$ except that we must reverse the inclusion relations for negatives in the partial orders (that is, $\bigcup_{j\in[u_i,u_{i+1}-1]} q'_{j} \subset q_i$ for $\nspincl$ and $q'_i\subseteq q_i$ for $\lhd$ and $\nspinclplus$).

\section{Application to pattern mining}
The definitions of pattern support, frequent pattern and pattern mining derive naturally from the notion of occurrence of a negative sequential pattern, no matter the choices for embedding (soft or strict), non-inclusion (partial or total) and occurrences (weak or strong).
However, these choices about the semantics of NSPs impact directly the number of frequent patterns (under the same minimal threshold constraint) and also computation time.
The stronger the negative constraints, the fewer the number of sequences containing a pattern, and the lesser the number of frequent patterns.

\begin{definition}[Pattern supports]
Let $\mathcal{D}=\{\seq{s}_i\}_{i\in[n]}$ be a dataset of sequences and $\seq{p}$ be an NSP.
The support of $\seq{p}$ in $\mathcal{D}$, denoted 
$supp\!\!-\!\!{\theta}^\mathcal{D}(p)$, 
is the number of sequences of $\mathcal{D}$ in which $\seq{p}$ occurs according to the $\theta\in\Theta$ containment relation.
\end{definition}

When there is no ambiguity on the dataset of sequences, $supp\!\!-\!\!{\theta}^\mathcal{D}(p)$ is denoted $supp\!\!-\!\!{\theta}(p)$. 

Clearly, if a containment relation $\theta$ is dominated by another containment relation $\theta'$, then the support of the pattern evaluated with $\theta$ is lower than the support of the pattern evaluated with $\theta'$. The next proposition ensues from Proposition~\ref{prop:dominances}.

\begin{proposition}
\label{prop:support_dominances}
For $\seq{p}\in\mathcal{N}$,
\begin{align}
supp-\!\stronglycontains^{\gennonincl}_{\genemb}(\seq{p}) & \leq supp-\!\weaklycontains^{\gennonincl}_{\genemb}(\seq{p})
\label{eq:support_dominance3}
\end{align}
\vspace*{-4ex}
\begin{align}
\begin{split}
supp-\!\stronglycontains^{\gennonincl}_{\strictemb}(\seq{p}) & \leq supp-\!\stronglycontains^{\gennonincl}_{\softemb}(\seq{p})\\[-0.5ex]
supp-\!\weaklycontains^{\gennonincl}_{\strictemb}(\seq{p}) & \leq supp-\!\weaklycontains^{\gennonincl}_{\softemb}(\seq{p}) 
\end{split}
\end{align}
\vspace*{-2ex}
\begin{align}
\begin{split}
supp-\!\stronglycontains^{\totalnonincl}_{\softemb}(\seq{p}) & \leq supp-\!\stronglycontains^{\totalnonincl}_{\strictemb}(\seq{p})\\[-0.5ex]
supp-\!\weaklycontains^{\totalnonincl}_{\softemb}(\seq{p}) & \leq supp-\!\weaklycontains^{\totalnonincl}_{\strictemb}(\seq{p})
\end{split}
\end{align}
\vspace*{-2ex}
\begin{align}
\begin{split}
supp-\!\stronglycontains^{\totalnonincl}_{\genemb}(\seq{p}) & \leq supp-\!\stronglycontains^{\partialnonincl}_{\genemb}(\seq{p})\\[-0.5ex]
supp-\!\weaklycontains^{\totalnonincl}_{\genemb}(\seq{p}) & \leq supp-\!\weaklycontains^{\partialnonincl}_{\genemb}(\seq{p})
\end{split}
\end{align}
\end{proposition}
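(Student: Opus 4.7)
The plan is to derive each inequality as an immediate corollary of the corresponding dominance relation in Proposition~\ref{prop:dominances}, via the elementary fact that if one set of sequences is contained in another, then its cardinality is no greater.

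More precisely, I would first state the following general lemma (implicit in the paper and worth making explicit at the start of the proof): for any two containment relations $\theta,\theta'\in\Theta$, if $\theta\dom\theta'$, then $supp\!\!-\!\!\theta^{\mathcal{D}}(\seq{p}) \leq supp\!\!-\!\!\theta'^{\mathcal{D}}(\seq{p})$ for every NSP $\seq{p}$ and every dataset $\mathcal{D}$. This is because, by Definition~\ref{def:dominance}, $\theta\dom\theta'$ means $\seq{p}\theta\seq{s}\implies\seq{p}\theta'\seq{s}$, so the set $\{\seq{s}\in\mathcal{D}\mid\seq{p}\theta\seq{s}\}$ is included in $\{\seq{s}\in\mathcal{D}\mid\seq{p}\theta'\seq{s}\}$; taking cardinalities yields the inequality on supports.

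With this lemma in hand, each of the four displays of the proposition is a direct instantiation: the inequality $supp\!\!-\!\!\stronglycontains^{\gennonincl}_{\genemb}(\seq{p})\leq supp\!\!-\!\!\weaklycontains^{\gennonincl}_{\genemb}(\seq{p})$ follows from Equation~(\ref{eq:dominance3}); the pair of inequalities comparing strict- and soft-embeddings under partial non-inclusion follows from Equation~(\ref{eq:dominance1}); the pair comparing soft- and strict-embeddings under total non-inclusion follows from Equation~(\ref{eq:dominance2}) (where in fact the two sides are equal, by Lemma~\ref{prop:sqsubset_eqembeddings}, but the inequality still holds); and the two inequalities comparing total and partial non-inclusion follow from Equation~(\ref{eq:dominance4}). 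In each case I would just cite the relevant equation number.

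There is essentially no obstacle here: the proposition is a packaging result that translates the qualitative dominance structure of Section~\ref{sec:dominance} into the quantitative language of supports. The only mild care needed is to note that the stated inequalities involve \emph{different} pairs of $(\genemb,\gennonincl)$ choices and one should verify that each pair in the proposition is indeed one of the dominances listed in Proposition~\ref{prop:dominances} (and not something that only arises by transitivity, although the argument would go through unchanged in that case since dominance is a pre-order by Lemma~\ref{lemma:dom_preorder}).
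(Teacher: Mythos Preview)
Your proposal is correct and matches the paper's own proof essentially verbatim: the paper also observes that $\theta\dom\theta'$ implies $supp\!\!-\!\!\theta(\seq{p})\leq supp\!\!-\!\!\theta'(\seq{p})$ by Definition~\ref{def:dominance}, and then invokes Proposition~\ref{prop:dominances} directly. The only (harmless) slip is that the second display uses the generic symbol $\gennonincl$, not specifically partial non-inclusion, but you cite the right equation~(\ref{eq:dominance1}) regardless.
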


In addition, the following anti-monotonicity properties of support measures ensue from Propositions \ref{prop:antimon_lhd} and \ref{prop:antimon_nspinclplus}.
\begin{proposition}
\label{prop:support_antimonotonie}
For $\seq{p}, \seq{p}'\in\mathcal{N}$,
\begin{align}
\seq{p} \lhd \seq{p}' & \implies
supp-\!\weaklycontains^{\totalnonincl}_{\genemb}(\seq{p}') \leq supp-\!\weaklycontains^{\totalnonincl}_{\genemb}(\seq{p})
\label{eq:support_antimonotonie1}
\end{align}
\vspace*{-2ex}
\begin{align}
\seq{p} \nspinclplus \seq{p}' & \implies
\left\{\begin{array}{l}
supp-\!\stronglycontains^{\totalnonincl}_{\genemb}(\seq{p}') \leq supp-\!\stronglycontains^{\totalnonincl}_{\genemb}(\seq{p})
\\[0.5ex]
supp-\!\weaklycontains^{\totalnonincl}_{\genemb}(\seq{p}') \leq supp-\!\weaklycontains^{\totalnonincl}_{\genemb}(\seq{p})
\end{array}\right.
\label{eq:support_antimonotonie2}
\end{align}
\end{proposition}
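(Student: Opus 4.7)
The plan is to observe that this proposition is essentially a repackaging of Propositions~\ref{prop:antimon_lhd} and~\ref{prop:antimon_nspinclplus} in the language of support counts, so the whole argument is a translation from a pointwise implication on sequences to a cardinality inequality on subsets of $\mathcal{D}$.

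First I would fix a dataset $\mathcal{D}=\{\seq{s}_i\}_{i\in[n]}$ and, for each $\theta\in\Theta$ and each $\seq{p}\in\mathcal{N}$, introduce the cover set
\[
\mathrm{cov}_\theta(\seq{p}) \;=\; \{\,\seq{s}\in\mathcal{D} \,\mid\, \seq{p}\,\theta\,\seq{s}\,\},
\]
so that by the definition of support we have $supp\!\!-\!\!\theta^{\mathcal{D}}(\seq{p}) = |\mathrm{cov}_\theta(\seq{p})|$. The key observation is: if $\theta$ is anti-monotonic on $(\mathcal{N},\ltimes)$ in the sense of Definition~\ref{def:antimonotonicity}, and if $\seq{p}\ltimes\seq{p}'$, then for every $\seq{s}\in\mathcal{D}$, $\seq{p}'\,\theta\,\seq{s}$ implies $\seq{p}\,\theta\,\seq{s}$, which is exactly $\mathrm{cov}_\theta(\seq{p}')\subseteq\mathrm{cov}_\theta(\seq{p})$. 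Monotonicity of cardinality under inclusion then gives $|\mathrm{cov}_\theta(\seq{p}')|\leq|\mathrm{cov}_\theta(\seq{p})|$, i.e.\ $supp\!\!-\!\!\theta(\seq{p}')\leq supp\!\!-\!\!\theta(\seq{p})$.

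To finish, I would invoke this template four times. For~(\ref{eq:support_antimonotonie1}), I apply it with $\ltimes=\lhd$ and $\theta=\weaklycontains^{\totalnonincl}_{\softemb}$, then with $\theta=\weaklycontains^{\totalnonincl}_{\strictemb}$, each time using Proposition~\ref{prop:antimon_lhd} to justify anti-monotonicity. For~(\ref{eq:support_antimonotonie2}), I apply it with $\ltimes=\nspinclplus$ and in turn $\theta\in\{\stronglycontains^{\totalnonincl}_{\softemb},\stronglycontains^{\totalnonincl}_{\strictemb},\weaklycontains^{\totalnonincl}_{\softemb},\weaklycontains^{\totalnonincl}_{\strictemb}\}$, each time drawing the anti-monotonicity from Proposition~\ref{prop:antimon_nspinclplus}. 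The two inequalities promised on the right-hand side of~(\ref{eq:support_antimonotonie2}) are the cases $\theta=\stronglycontains^{\totalnonincl}_{\genemb}$ and $\theta=\weaklycontains^{\totalnonincl}_{\genemb}$ respectively, where $\genemb$ is understood as the universally-quantified symbol inherited from the statement.

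There is no real obstacle here: all the combinatorial work on embeddings and scopes has already been done upstream. The only thing to be careful about is not quietly swapping the direction of the implication — one must remember that anti-monotonicity transfers containment \emph{downwards} along $\ltimes$, so the smaller pattern (w.r.t.\ $\ltimes$) ends up with the larger support, which is exactly what~(\ref{eq:support_antimonotonie1}) and~(\ref{eq:support_antimonotonie2}) assert. The proof is then essentially a one-line appeal per case.
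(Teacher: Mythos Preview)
Your proposal is correct and mirrors the paper's own proof: both derive the support inequalities directly from Propositions~\ref{prop:antimon_lhd} and~\ref{prop:antimon_nspinclplus} by turning the pointwise implication $\seq{p}'\,\theta\,\seq{s}\Rightarrow\seq{p}\,\theta\,\seq{s}$ into a cardinality inequality on the supporting sequences. Your explicit introduction of cover sets is a harmless elaboration of what the paper leaves implicit.
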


There are two practical ways to exploit these results to implement efficient frequent NSP mining algorithms. On the one hand, the results from Proposition \ref{prop:support_antimonotonie} can be directly used to implement algorithms with efficient and correct strategies to prune the search space.\footnote{Completeness and non-redundancy of algorithms are out of the scope of this article.} 
For $\weaklycontains^{\totalnonincl}_{\genemb}$ containment relation, Equation \ref{eq:support_antimonotonie1} exploits the $\lhd$ partial order to early prune a priori unfrequent patterns.
For $\stronglycontains^{\totalnonincl}_{\genemb}$ containment relation, the $\nspinclplus$ partial order must be used to ensure the correctness of the algorithm (Equation \ref{eq:support_antimonotonie2}). Unfortunately, $\nspinclplus$ is less interesting than $\lhd$ because there are fewer pairs of comparable patterns.
On the other hand, the support evaluated with $\weaklycontains^{\totalnonincl}_{\genemb}$ is an upper bound for the support of $\stronglycontains^{\totalnonincl}_{\genemb}$ (Equation \ref{eq:support_dominance3}). Thus, it is possible also to prune patterns accessible with the partial order $\lhd$ without losing the correctness of the pruning strategy.

\section{A proposal to disambiguate syntax of negative sequential patterns}
The $\neg$ symbol is overloaded in the literature about negative sequential pattern mining. 
Our intuition was that the different approaches \cite{cao2016nsp,Gong2017eNSPFI,Guyet2020,hsueh:2008:PNSP,zheng:2009:negative} do not extract the same set of patterns because of slightly different definition of negative patterns. 
Our framework deals with the need to define an unambiguous containment relation between a negative sequential pattern $\seq{p}$ and a sequence $\seq{s}$ that informs the user about:
\begin{itemize}
\item how multiple occurrences of the positive part of $\seq{p}$ are handled,
\item how negative itemsets are handled (type of embedding and type of non-inclusion relation).
\end{itemize}
We separate these two dimensions of our definition of a containment relation because the second refers to single itemsets, while the first refers to the whole pattern.

Thanks to our framework, we are able to assign a containment relation to each approach from the literature. The approaches based on eNSP \cite{cao2016nsp,Gong2017eNSPFI} are based on the containment relation $\stronglycontains^{\totalnonincl}_{\strictemb}$, PNSP \cite{hsueh:2008:PNSP} uses the relation $\weaklycontains^{\partialnonincl}_{\strictemb}$, and NegPSpan \cite{Guyet2020} and NegGSP \cite{zheng:2009:negative}
deal with the equivalent relations $\weaklycontains^{\totalnonincl}_{\strictemb}$ and $\weaklycontains^{\totalnonincl}_{\softemb}$.

This confirms our initial intuition: the different approaches do not use the same containment relations and thus they do not aim at extracting the same set of patterns. 
Moreover, it is worth noticing that these four approaches explore a large range of the possible containment relations. 
eNSP exploits the strong notion of occurrence while the other approaches exploit the weak notion. 
All but PNSP approaches are based on total non-inclusion. 
Strict-embedding ($\strictemb$) is generally preferred to soft-embedding ($\softemb$).  
eNSP ($\stronglycontains^{\totalnonincl}_{\strictemb}$) made the most restrictive choice by using the containment relation that dominates all the others. 
On the opposite, the two least restrictive choices ($\weaklycontains^{\partialnonincl}_{\softemb}$ and $\stronglycontains^{\partialnonincl}_{\softemb}$) have not been explored, presumably due to their obvious lack of suitable properties for pattern mining. 

Finally, 
it is worth comparing negative sequential patterns with some formulas in Linear Temporal Logic on finite traces (LTLf) \cite{deGiacomo2013}.
The question is to find specific LTLf formulas capturing our containment relations between any two patterns $\seq{p}$ and $\seq{s}$.
Then, it is interesting to notice that containment relations based on soft-embedding have simple counterparts in the language of LTLf.
Indeed, the soft-embedding constraint imposes each successive itemset of a sequence to not contain some negated items. 
The strict-embedding constraint, which requires to evaluate a union of items, does not fit well to the linearity of LTLf formulas.

\espace

\section{Conclusions and perspectives}
In this article, we investigated formal properties of the semantics of negation in sequential patterns to answer our two main questions. 
\begin{enumerate}
\item %
\emph{What is a proper support measure for negative sequential patterns?} 
We gave eight possible semantics and as many support measures. We can conclude that there is not a single way to evaluate the support of an NSP.
\item 
\emph{Is there a support measure enjoying anti-monotonicity?} 
We run counter the state of the art by proposing three partial orders for which anti-monotonicity holds although only for some semantics of negative sequential pattern mining.
\end{enumerate}

The combination of partial order $\lhd$ and containment relation $\weaklycontains^{\totalnonincl}_{\genemb}$ appears to be a good candidate for developping a complete, correct and non-redundant negative sequential pattern mining algorithm \cite{Guyet2020}. 
One advantage of an approach based on an anti-monotonic support measure is the benefit from decades of research in pattern mining so as to extend the mining of NSP to the mining of closed NSP or the mining of NSP with \textit{maxgap} or \textit{maxspan} constraints. 

Nonetheless, no semantics is ``more correct'' or relevant than another one. It depends on the notion to be captured. Our objective is to give the opportunity to make an educated choice. 
It is especially important with NSP as the choice of a mining algorithm is not only a matter of computational efficiency, but also a matter of semantics.

In view of Definition \ref{def:NSP_embedding} and Lemma \ref{prop:sqsubset_eqembeddings}, three possibilities arise for evaluating a negative itemset (syntactically distinguished below by writing a negative itemset $\neg (a_1, \dots, a_{l_i})$ or $\neg \{a_1, \dots, a_{l_i}\}$ or $\neg |a_1, \dots, a_{l_i}|$) as follows:
\begin{description}
\item[$\neg (a_1, \dots, a_{l_i})$] is evaluated as
\[\mbox{$\{a_1, \dots, a_{l_i}\} \partialnoninclrel s_j,\;\forall j\in [e_{i}+1,e_{i+1}-1]$ for all $i\in[m-1]$}\] Intuitively, you check that, in between $s_{e_i}$ (i.e., a match for $p_i$) and $s_{e_{i+1}}$ (i.e., a match for $p_{i+1}$), none of these $s_{j}$ include all of $a_1, \dots, a_{l_i}$.\vspace*{2.5ex}
\item[$\neg \{a_1, \dots, a_{l_i}\}$] is evaluated as
\[\mbox{$\{a_1, \dots, a_{l_i}\} \partialnoninclrel \bigcup_{j\in [e_{i}+1,e_{i+1}-1]} s_j$ for all $i\in[m-1]$}\]
Intuitively, you check that there exists some item in $a_1, \dots, a_{l_i}$ that does not occur at all in between $s_{e_i}$ (i.e., a match for $p_i$) and $s_{e_{i+1}}$ (i.e., a match for $p_{i+1}$).\vspace*{2.5ex}
\item[$\neg |a_1, \dots, a_{l_i}|$] is evaluated as
\[\mbox{$\{a_1, \dots, a_{l_i}\} \totalnoninclrel \bigcup_{j\in [e_{i}+1,e_{i+1}-1]} s_j$ for all $i\in[m-1]$}\]
Intuitively, you check that every item in $a_1, \dots, a_{l_i}$ fails to occur in between $s_{e_i}$ (\ie, a match for $p_i$) and $s_{e_{i+1}}$ (i.e., a match for $p_{i+1}$).
\end{description}

This opens the way for a syntax of negative sequential patterns that is even more expressive. 
Indeed, it enables to mix different types of negation within a pattern. For instance, we can specify patterns such as $\langle a\ \neg|bc|\ f\ \neg\{ac\}\ b\rangle$ 
(intuitively: none of $b$ and $c$ occur between $a$ and $f$; also, either $a$ or $c$ (or both) does not occur at all between $f$ and $b$).

The first perspective of this work is to evaluate the proposed notations on a panel of real users. A preliminary survey concluded on a lack of any dominant interpretation of the $\neg$ symbol. We would like to confirm this preliminary result on a larger panel and to evaluate the benefit of having a dedicated syntax for each containment relation.

Our second perspective is to extend our theoretical results from the pattern recognition perspective. 
Matching sequential patterns in sequences is a fundamental issue in monitoring of discrete event systems, in genetic data analysis, in text analysis, etc. Adding negations to sequential patterns increases the expressivity of the pattern language. It raises questions about space and time complexity of the recognition and/or enumeration of negative sequential patterns: are the different containment relations equally hard to evaluate in sequences?

\bibliography{biblio}

\newpage
\section*{Proofs}

\begin{proof}[Proof of Lemma \ref{lemma:non_incl_relation}]
Let $P,I\subseteq\mathcal{I}$ s.t. $P\totalnoninclrel I$. If $P=\emptyset$, by definition, $P\partialnoninclrel I$. Otherwise, because $P$ is not empty, then there exists $e\in P$ s.t. $e\not\in I$, \ie $P\partialnoninclrel I$.
\end{proof}

\begin{proof}[Proof of Lemma \ref{lemma:bulletimpliescirc}]
Let $\seq{e}=(e_i)_{i\in[m]}\in [n]^m$ be a $\strictemb$-embedding of a NSP $\seq{p}=\langle p_1\ \neg q_1\ \dots\ \neg q_{m-1}\ p_m\rangle$ in a sequence $\seq{s}=\langle s_1\ \dots\ s_n\rangle$. 
For all positive itemsets $p_i$, the definition of $\strictemb$-embedding matches the one for $\softemb$-embedding. 
For a negative itemset $q_i$, let us start with $\partialnoninclrel\myeq\gennoninclrel$. By definition \ref{def:NSP_embedding}, 
$q_i \partialnoninclrel \bigcup_{j\in [e_{i}+1,e_{i+1}-1]}s_j$, and by Definition \ref{def:IS_notincluded}, $\exists \alpha\in q_i,\; \alpha\notin \bigcup_{j\in [e_{i}+1,e_{i+1}-1]}s_j$. And then, $\exists \alpha\in q_i,\;\forall j\in [e_{i}+1,e_{i+1}-1],\; \alpha\notin s_j$. That is $\forall j\in [e_{i}+1,e_{i+1}-1],\; \exists \alpha\in q_i,\;\alpha \notin s_j$. This shows $\forall j\in [e_{i}+1,e_{i+1}-1],\; q_i\partialnoninclrel s_j$ ($\softemb$-embedding definition).
It remains $\totalnoninclrel\myeq\gennoninclrel$. By definition \ref{def:NSP_embedding}, 
$q_i \totalnoninclrel \bigcup_{j\in [e_{i}+1,e_{i+1}-1]}s_j$, and by Definition \ref{def:IS_notincluded}, $\forall \alpha\in p_i,\; \alpha\notin \bigcup_{j\in [e_{i}+1,e_{i+1}-1]}s_j$. And then, $\forall \alpha\in q_i,\;\forall j\in [e_{i}+1,e_{i+1}-1],\; \alpha\notin s_j$. That is $\forall j\in [e_{i}+1,e_{i+1}-1],\; \forall \alpha\in q_i,\;\alpha\notin s_j$. This shows $\forall j\in [e_{i}+1,e_{i+1}-1],\; q_i\gennoninclrel s_j$.
\end{proof}

\begin{proof}[Proof of Lemma \ref{prop:sqsubset_eqembeddings}]
Let $\seq{s}=\langle s_1\ \dots\ s_n\rangle$ be a sequence and $\seq{p}=\langle p_1\ \neg q_1\ \dots\ \neg q_{m-1}\ p_m\rangle$ be a negative sequential pattern.
Lemma \ref{lemma:bulletimpliescirc} shows that $\strictemb$-embedding implies $\softemb$-embedding.
It remains the implication to the left.
Let $\seq{e}=(e_i)_{i\in[m]}\in [n]^m$ be a $\softemb$-embedding of pattern $\seq{p}$ in sequence $\seq{s}$. Then, the definition matches the one for $\strictemb$-embedding for positives, $p_i$. For negatives, $q_i$, then $\forall j\in [e_{i}+1,e_{i+1}-1],\; q_i \totalnoninclrel s_j$, \ie $\forall j\in [e_{i}+1,e_{i+1}-1],\; \forall \alpha\in q_i,\; \alpha \notin s_j$ and then $\forall \alpha\in q_i,\; \forall j\in [e_{i}+1,e_{i+1}-1],\; \alpha \notin s_j$.
It thus implies that $\forall \alpha\in q_i,\; \alpha \notin \bigcup_{j\in [e_{i}+1,e_{i+1}-1]} s_j$, \ie by definition, $q_i \totalnoninclrel \bigcup_{j\in [e_{i}+1,e_{i+1}-1]} s_j$.
\end{proof}

\begin{proof}[Proof of Lemma \ref{lemma:soft_implies_strict}]
Let $\seq{s}=\langle s_1\ \dots\ s_n\rangle$ be a sequence and $\seq{p}=\langle p_1\ \neg q_1\ \dots\ \neg q_{m-1}\ p_m\rangle$ be a NSP s.t. $\forall i,\; |q_i|\leq 1$.
Due to Lemma \ref{prop:sqsubset_eqembeddings}, we only need to deal with the case that $\gennoninclrel$ is $\partialnoninclrel$. 
Let $\seq{e}=(e_i)_{i\in[m]}\in[n]^m$ be a $\softemb$-embedding of $\seq{p}$ in $\seq{s}$ then, by definition, 1) $p_i\subseteq s_{e_i}$ for all $i\in[m]$ and 2) $q_i \partialnoninclrel s_{j}$ for all $j\in [e_{i}+1,e_{i+1}-1]$. 
In case $|q_i|=0$, there is no constraint. In case $|q_i|=1$, 
then 2) becomes $q_i\notin s_{j}$ for all $j\in [e_{i}+1,e_{i+1}-1]$.
Hence, $q_i\notin \bigcup_{j\in [e_{i}+1,e_{i+1}-1]}{s_j}$ \ie $q_i \partialnoninclrel \bigcup_{j\in [e_{i}+1,e_{i+1}-1]}{s_j}$.
As a consequence $\seq{e}$ is a $\strictemb$-embedding of $\seq{p}$.
\end{proof}

\begin{proof}[Proof of Lemma \ref{lemma:pos_embedding}]
Let $\seq{s}=\langle s_1\ \dots\  s_n\rangle$ be a sequence and $\seq{p}=\langle p_1\ \neg q_1\ \dots\ \neg q_{m-1}\ p_m\rangle \in \mathcal{N}$ be a pattern. 
By definition \ref{def:NSP_embedding}, if $\seq{e}=(e_i)_{i\in[m]}\in [n]^m$ is an embedding of pattern $\seq{p}$ in sequence $\seq{s}$ then $\forall i\in[m]$: $p_{i} \subseteq s_{e_{i}}$ because $p_{i}$ is positive.
The condition $e_i < e_{i+1}$ in Definition \ref{def:positivepattern_embedding} immediately follows from the requirement in Definition \ref{def:NSP_embedding} that $\seq{e}$ be increasing with no repetition.
It ensues that $\seq{e}$ is an embedding of the positive pattern $\seq{p}^+$.
\end{proof}

\begin{proof}[Proof of Lemma \ref{lemma:strictocc_implies_softocc}]
Let $\seq{s}=\langle s_1\ \dots\ s_n\rangle$ be a sequence and $\seq{p}=\langle p_1\ \neg q_1\ \dots\ \neg q_{m-1}\ p_m\rangle \in \mathcal{N}$ be a pattern s.t. $p\stronglycontains^{\gennonincl}_{\genemb}s$.
Then, there exists $\seq{e}$ an embedding of $\seq{p}^+$ in $\seq{s}$ and, by definition, it is also an embedding of $\seq{p}$ in $\seq{s}$. 
This means that $p\weaklycontains^{\gennonincl}_{\genemb}s$.
\end{proof}

\begin{proof}[Proof of Lemma \ref{lemma:notstrict_implies_notsoft}]
Let $\seq{s}=\langle s_1\ \dots\ s_n\rangle$ be a sequence and $\seq{p}=\langle p_1\ \neg q_1\ \dots\ \neg q_{m-1}\ p_m\rangle \in \mathcal{N}$ be a pattern.

We start by considering relations between semantics at the embedding level, and then we will conclude at the pattern level.

Let's first assume that $\genemb$ is $\softemb$.
Consider a $(\softemb,\totalnonincl)$-embedding 
$e=(e_i)_{i\in[m]}$ of pattern $\seq{p}$ in sequence $\seq{s}$. Hence, $\forall i\in[m]$, $p_i \subseteq s_{e_i}$ and $\forall i\in[m-1]$, $\forall j\in [e_{i}+1,e_{i+1}-1],\, q_i \totalnoninclrel s_j$. According to eq. \ref{eq:non_incl_relation}, we have $q_i \partialnoninclrel s_j$. 
It ensues that $e$ is a $(\softemb,\weaklycontains)$-embedding.

Let's now assume that $\genemb$ is $\strictemb$. 
Let $e=(e_i)_{i\in[m]}$ be a $(\strictemb,\totalnonincl)$-embedding 
of pattern $\seq{p}$ in sequence $\seq{s}$. Hence, $\forall i\in[m]$, $p_i \subseteq s_{e_i}$ and $\forall i\in[m-1]$, $q_i \totalnoninclrel \bigcup_{j\in [e_{i}+1,e_{i+1}-1]} s_j$. In addition $\bigcup_{j\in [e_{i}+1,e_{i+1}-1]} s_j \subseteq \mathcal{I}$, hence $\forall i \in [m-1],\, q_i \partialnoninclrel \bigcup_{j\in [e_{i}+1,e_{i+1}-1]} s_j$ according to eq. \ref{eq:non_incl_relation}. 
It ensues that $e$ is an $(\strictemb,\weaklycontains)$-embedding.

Let's come back to the pattern level. 
Consider $p\gencontains^{\totalnonincl}_{\genemb}s$, in the two cases ($\gencontains\in\{\weaklycontains,\stronglycontains\}$). In the first case the existing $(\genemb,\totalnonincl)$-embedding is a $(\genemb,\partialnonincl)$-embedding, and in the second case, all $(\genemb,\totalnonincl)$-embeddings are $(\genemb,\partialnonincl)$-embeddings. Therefore, we have that $p\gencontains^{\partialnonincl}_{\genemb}s$.
\end{proof}

\begin{proof}[Proof of Lemma \ref{lemma:dom_preorder}]
A pre-order is a reflexive, transitive binary relation. 
The reflexivity of the relation comes with Definition \ref{def:dominance}. 
Let $\theta,\theta', \theta''\in\Theta$ be three dominance relations s.t. $\theta \dom \theta'$ and $\theta' \dom \theta''$. Then, for all $\seq{p}\in\mathcal{N}$ and sequence $\seq{s}$: $\seq{p}\theta \seq{s}\implies \seq{p}\theta' \seq{s}$ and $\seq{p}\theta'\seq{s}\implies \seq{p}\theta''\seq{s}$.
Hence, we have, $\seq{p}\theta\seq{s}\implies \seq{p}\theta''\seq{s}$, \ie $\theta \dom \theta''$.
\end{proof}

\begin{proof}[Proof of Lemma \ref{lemma:equivalent_relation}]
Let $\theta,\theta'\in\Theta$, by reflexivity of $\dom$ we have that $\sim$ is reflexive. By definition ($\theta\dom\theta' \wedge \theta'\dom\theta$), $\sim$ is symmetric. And , $\sim$ is also transitive. Let $\theta,\theta', \theta''\in\Theta$ be three dominance relations s.t.\ $\theta \sim \theta'$ and $\theta' \sim \theta''$ then, $\theta \dom \theta'$, $\theta' \dom \theta''$, $\theta' \dom \theta$ and $\theta'' \dom \theta'$. Hence, by transitivity of $\dom$, $\theta \dom \theta''$ and $\theta'' \dom \theta$, $\theta \sim \theta''$.
\end{proof}

\begin{proof}[Proof of Proposition \ref{prop:dominances}]
Let $\seq{p}\in\mathcal{N}$ and $\seq{s}$ a sequence.

According to Lemma \ref{lemma:strictocc_implies_softocc}, $\seq{p}\stronglycontains^{\gennonincl}_{\genemb}\seq{s} \implies \seq{p}\weaklycontains^{\gennonincl}_{\genemb}\seq{s}$. Thus we obtain Equality \ref{eq:dominance3} by Definition \ref{def:dominance}.

According to Lemma \ref{lemma:notstrict_implies_notsoft}, $\seq{p}\gencontains^{\totalnonincl}_{\genemb}\seq{s} \implies \seq{p}\gencontains^{\partialnonincl}_{\genemb}\seq{s}$. Thus we obtain Equality \ref{eq:dominance4} by Definition \ref{def:dominance}.

According to Lemma \ref{lemma:bulletimpliescirc}, a $\strictemb$-embedding is a $\softemb$-embedding whatever the itemset non-inclusion operator. 
Then, we get $\seq{p}\weaklycontains_{\strictemb}^{\gennonincl}\seq{s} \implies \seq{p}\weaklycontains_{\softemb}^{\gennonincl}\seq{s}$. 
Lemma \ref{lemma:bulletimpliescirc} holds for any $\strictemb$-embedding, hence $\seq{p}\stronglycontains_{\strictemb}^{\gennonincl}\seq{s} \implies \seq{p}\stronglycontains_{\softemb}^{\gennonincl}\seq{s}$.
All this gives 
$\seq{p}\gencontains_{\strictemb}^{\gennonincl}\seq{s} \implies \seq{p}\gencontains_{\softemb}^{\gennonincl}\seq{s}$ (Equality \ref{eq:dominance1}).

In addition, Lemma \ref{prop:sqsubset_eqembeddings} shows that a $\softemb$-embedding is a $\strictemb$-embedding 
(and vice-versa)
in case of total itemset non-inclusion. Then, we can conclude that $\seq{p}\gencontains_{\softemb}^{\totalnonincl}\seq{s} \implies \seq{p}\gencontains_{\strictemb}^{\totalnonincl}\seq{s}$ (Equality \ref{eq:dominance2}).

\espace

Let now gives some counterexamples for known non-dominance relationships. 
For each non-dominance relation, $\theta\not\dom\theta'$, we provide counterexamples for some $\theta,\theta'\in\Theta$, \ie a pattern $\seq{p}$ and a sequence $\seq{s}$ such that $\seq{p}\theta\seq{s}$ but not  $\seq{p}\theta'\seq{s}$.

\begin{itemize}
\item Equation (\ref{eq:nondominance1}): $\theta = (\partialnonincl,\genemb,\gencontains)$, $\theta'=(\totalnonincl,\genemb,\gencontains)$:
Let $\seq{p}=\langle a\ \neg (bc)\ d\rangle$, $\seq{s}=\langle a\ b\ d\rangle$, then $\seq{p}\theta\seq{s}$ but not $\seq{p}\theta'\seq{s}$.
\item Equation (\ref{eq:nondominance2}): $\theta = (\gennonincl,\genemb,\weaklycontains)$, $\theta'=(\gennonincl,\genemb,\stronglycontains)$:
Let $\seq{p}=\langle a\ \neg b\ c\rangle$, $\seq{s}=\langle a\ c\ b\ c\rangle$, then $\seq{p}\theta\seq{s}$ but not $\seq{p}\theta'\seq{s}$.
\item Equation (\ref{eq:nondominance3}): $\theta = (\partialnonincl,\softemb,\gencontains)$, $\theta'=(\partialnonincl,\strictemb,\gencontains)$:
Let $\seq{p}=\langle a\ \neg (bc)\ d\rangle$, $\seq{s}=\langle a\ b\ c\ d\rangle$, then $\seq{p}\theta\seq{s}$ but not $\seq{p}\theta'\seq{s}$.
\item Equation (\ref{eq:nondominance4}): $\theta = (\partialnonincl,\strictemb,\weaklycontains)$, $\theta'=(\partialnonincl,\softemb,\stronglycontains)$:
Let $\seq{p}=\langle a\ \neg (bc)\ d\rangle$, $\seq{s}=\langle a\ b\ d\ c\ d\rangle$, then $\seq{p}\theta\seq{s}$ but not $\seq{p}\theta'\seq{s}$. The strict embedding works for one embedding of the positive partner, but there is a positive partner embedding for which even the soft-embedding.
\item Equation (\ref{eq:nondominance5}): $\theta = (\partialnonincl,\softemb,\stronglycontains)$, $\theta'=(\partialnonincl,\strictemb,\weaklycontains)$:
Let $\seq{p}=\langle a\ \neg (bc)\ d\rangle$, $\seq{s}=\langle a\ b\ c\ d\rangle$, then $\seq{p}\theta\seq{s}$ but not $\seq{p}\theta'\seq{s}$.
\item Equation (\ref{eq:nondominance6}): $\theta = (\gennonincl,\strictemb,\weaklycontains)$, $\theta'=(\gennonincl',\strictemb,\stronglycontains)$:
Let $\seq{p}=\langle a\ \neg b\ c\rangle$, $\seq{s}=\langle a\ c\ b\ c\rangle$, then $\seq{p}\theta\seq{s}$ but not $\seq{p}\theta'\seq{s}$. Redundant with (\ref{eq:nondominance2}) when $\gennonincl'=\gennonincl$.
\item Equation (\ref{eq:nondominance7}): $\theta = (\totalnonincl,\softemb,\weaklycontains)$, $\theta'=(\partialnonincl,\softemb,\stronglycontains)$:
Let $\seq{p}=\langle a\ \neg b\ c\rangle$, $\seq{s}=\langle a\ c\ b\ c\rangle$, then $\seq{p}\theta\seq{s}$ but not $\seq{p}\theta'\seq{s}$.
\item Equation (\ref{eq:nondominance8}): $\theta = (\partialnonincl,\strictemb,\stronglycontains)$, $\theta'=(\totalnonincl,\strictemb,\weaklycontains)$:
Let $\seq{p}=\langle a\ \neg (bc)\ d\rangle$, $\seq{s}=\langle a\ b\ d\rangle$, then $\seq{p}\theta\seq{s}$ but not $\seq{p}\theta'\seq{s}$.
\end{itemize}
\end{proof}

\begin{proof}[Proof of Lemma \ref{lemma:partialorders_relations}]
We start with the implication $\seq{p}\nspinclplus\seq{p}' \implies \seq{p}\lhd\seq{p}'$. 
Let $p,p'\in\mathcal{N}$ s.t. $\seq{p}\nspinclplus\seq{p}'$. By definition, $k=k'$ and 1.\ $\forall i \in [k],\; p_i= p'_i$, 2.\ $\forall i \in [k-1],\; q_i\subseteq q'_i$ and 3.\ $\exists j \in [k-1]$ s.t.\ $q_i\neq q'_i$.
A particular case of 1.\ is that $\forall i \in [k],\; p_i\subseteq p'_i$. In addition, the third condition of $\lhd$ is obtained easily from 3.\ by adding a disjunctive condition. Hence, $\seq{p}\lhd\seq{p}'$.

We now prove the second implication: $\seq{p}\lhd\seq{p}' \implies \seq{p}\nspincl\seq{p}'$. 
Let $p,p'\in\mathcal{N}$ s.t. $\seq{p}\lhd\seq{p}'$. Let's now define the sequence $u_i$ such that $u_i=i$ for all $i\in[k]$. By construction, we have that $u_i < u_{i+1}$, for all $i\in[k-1]$ (see the increasingness requirement in the definition of $\nspincl$). In addition, by definition of $\lhd$, we have that $\forall i \in [k],\; p_i\subseteq p'_{i} =p'_{u_i}$, and $\forall i \in [k-1], q_i\subseteq q'_{i} = \bigcup_{j\in[i,(i+1)-1]}q'_{j} =\bigcup_{j\in[u_i,u_{i+1}-1]}q'_{j}$.
Assuming $k=k'$, then $p_k\neq p'_{k}$ or $\exists j \in [k-1]$ s.t.\ $q_j\neq q'_j$. If $p_k\neq p'_{k}$ the third condition of $\nspincl$ is satisfied (with $j=k$). Otherwise, it is also satisfied with the $j$ of the definition of $\lhd$.
\end{proof}

Note that the proof of the results of Table \ref{tab:dominances} in the article is given at the end of this supplementary material.

\begin{proof}[Proof of Proposition \ref{prop:partialorders}]

We first remind that $\nspincl$ is a strict partial order iff the three following conditions hold: 
\begin{enumerate}
\item $\forall p \in\mathcal{N},$ not $p\nspincl p$ (irreflexive),
\item $\forall p,p',p'' \in\mathcal{N}$, $p \nspincl p'$ and $p' \nspincl p'' \implies p \nspincl p''$ (transitivity),
\item $\forall p,p' \in\mathcal{N}$, $p \nspincl p'\implies $ not $p' \nspincl p$ (antisymmetry)
\end{enumerate}

By Lemma \ref{lemma:partialorders_relations}, if $\nspincl$ is irreflexive and antisymmetric then so are $\lhd$ and $\nspinclplus$.
Hence, we only show that $\nspincl$ is a strict partial order and then show transitivity for $\lhd$ and $\nspinclplus$.

We now prove that $\nspincl$ is a strict partial order.\\
Irreflexivity. Let's assume that $\exists p \in\mathcal{N}$ s.t.\ $p \nspincl p$. Then, identity is the only possibility for $u$, \ie $u_i=i$, for all $i\in[k]$. Then, the third condition implies that $\exists j \in [k-1]$ s.t. $q_j\neq q_j$ or $p_j\neq p_j$, which is absurd. Then $\nspincl$ is irreflexive.\\
Transitivity. Let $p,p',p'' \in\mathcal{N}$ s.t. $p \nspincl p'$ and $p' \nspincl p''$. We denote by $(u_i)_i\in[k']^k$ and $(v_i)_i\in[k'']^{k'}$ the respective mapping, and we define $(w_i)_i\in[k'']^[k]$ such that $w_i = v_{u_i}$ for all $i\in[k]$.
Then, for all $i\in[k]$, $p_i\subseteq p'_{u_i}\subseteq p''_{v_{u_i}}=p''_{w_i}$; $q_i\subseteq \bigcup_{j\in[u_i,u_{i+1}-1]} q'_{j} = \bigcup_{j\in[u_i,u_{i+1}-1]}\bigcup_{l\in[v_{j},v_{j+1}-1]} q''_{l}$. The union of the $q''_{l}$ in the intervals $[v_{j},v_{j+1}-1]$ for $j\in[u_i,u_{i+1}-1]$ can be sum up as an union on the interval $[v_{u_i},v_{(u_{i+1}-1)+1}-1]=[v_{u_i},v_{u_{i+1}}-1]=[w_i,w_{i+1}-1]$ because intervals are contiguous. Then, $q_i\subseteq \bigcup_{j\in[w_i,w_{i+1}-1]} q''_{j}$.
Finally, if $k=k''$, then $k=k''=k'$ and then it exists $j\in[k]$, s.t. $p_j \neq p'_j \subseteq p''_j$ or $q_j \neq q'_j \subseteq q''_j$. 
Thus, $p_j \neq p''_j$ or $q_j \neq q''_j$. As a consequence, we have $p \nspincl p''$.\\
Antisymmetry. Let  $p,p' \in\mathcal{N}$ s.t. $p \nspincl p'$. Then, if $k<k'$ we can not have $p' \nspincl p$. Assuming that $k=k'$ (and thus $u_i=i$ for all $i\in[k]$), we have that there exists $j\in[k]$ s.t. $p_j \neq p'_{j}$ or $q_j \neq q'_{j}$. If 
$p_j \neq p'_{j}$ then, according to 1. $p_j \varsubsetneq p'_{j}$, hence $p'_j \subseteq p_{j}$ fails. If $q_j \neq q'_{j}$, then, according to 2. $q_j \varsubsetneq \bigcup_{j\in[u_i,u_{i+1}-1]} q'_{j} = q'_j$. Thus, it is not possible to have $q'_j \subseteq q_{j} = \bigcup_{j\in[u_i,u_{i+1}-1]} q_j$. As a consequence, we can not have $p' \nspincl p$.

We now turn to $\lhd$.

Transitivity. Let $p,p',p'' \in\mathcal{N}$ s.t. $p \lhd p'$ and $p' \lhd p''$. Then, for all $i\in[k]$, $p_i\subseteq p'_i\subseteq p''_i$ and for all $i\in[k-1]$, $q_i\subseteq q'_i \subseteq q''_i$ ($k\leq k'\leq k''$). 
Finally, if $k=k''$, then $k=k''=k'$. Assuming that $p_k=p'_k$ and $p'_k=p''_k$ then $p_k=p''_{k''}$.
Assuming that $p_k\neq p'_k$ or $p'_k\neq p''_k$, then $\exists j\in[k-1]$ s.t. $q_j \neq q'_j$ or $q'_j \neq q''_j$, and hence $q_j \neq q''_j$. Then, we have that $p\lhd p''$. 
We finish with $\nspinclplus$.

Transitivity. Let $p,p',p'' \in\mathcal{N}$ s.t. $p \nspinclplus p'$ and $p' \nspinclplus p''$. Then, for all $i\in[k]$, $p_i=p'_i=p''_i$ and for all $i\in[k-1]$, $q_i\subseteq q'_i \subseteq q''_i$ ($k= k'= k''$). 
Finally, it is not possible to have $q_i= q''_i$ for all $i\in[k-1]$. In fact, these equalities would entail $q_i = q'_i$ and $q'_i =q''_i$ for all $i\in[k-1]$ because $q_i\subseteq q'_i \subseteq q''_i$. But having all these further equalities is not possible according to 3. Therefore, we have that $p\nspinclplus p''$.
\end{proof}

\begin{proof}[Proof of Proposition \ref{prop:antimon_lhd}]
We start this proof by a small result about the anti-monotonicity of $\totalnoninclrel$. Let $P,Q\in\mathcal{I}$ be two itemsets s.t. $P\subseteq Q$, and $I\in\mathcal{I}$ another itemset. Then, $Q\totalnoninclrel I \implies P\totalnoninclrel I$. In fact, $Q\totalnoninclrel I$ implies that for all $e\in Q,\; e\not\in I$, and because $P\subseteq Q$, we also have that $e\in P,\; e\not\in I$.

Let $\seq{p}=\langle p_1\ \neg q_1\ \dots\ \neg q_{m-1}\ p_m\rangle \in \mathcal{N}$ and $\seq{p}'=\langle p'_1\ \neg q'_1\ \dots\ \neg q'_{m'-1}\ p'_{m'}\rangle \in \mathcal{N}$ be two NSPs s.t.\ $\seq{p}\lhd\seq{p}'$.
 
We first show that an $(\softemb,\totalnonincl)$-embedding of $\seq{p}'$ in a sequence $\seq{s}$, denoted $\seq{e}=(e_i)_{i\in[m']}$, induces an $(\softemb,\totalnonincl)$-embedding of $\seq{p}$.
By Definition \ref{def:NSP_embedding}, we have $p'_i \subseteq s_{e_i},\, \forall i\in[m']$
and $q'_i \totalnoninclrel s_j$, for all $j\in [e_{i}+1,e_{i+1}-1]$ and for all $i\in[m'-1]$.\\
On the other hand, $\seq{p}\lhd\seq{p}'$ implies that $p_i\subseteq p'_i$ for all $i\in[m]$. Then, because $m\leq m'$ ($\seq{p}\lhd\seq{p}'$), we have that $p_i\subseteq s_{e_i}$ for all $i\in[m]$.
In addition, $\seq{p}\lhd\seq{p}'$  also implies that $q_i\subseteq q'_i$ for all $i\in[m-1]$ and thus, by anti-monotonicity of $\totalnoninclrel$ (and $q'_i \totalnoninclrel s_j$), we have $q_i\totalnoninclrel s_j$ for all $j\in [e_{i}+1,e_{i+1}-1]$ and for all $i\in[m-1]$. In conclusion, we have that $\seq{e}=(e_i)_{i\in[m]}$ is an $(\softemb,\totalnonincl)$-embedding of $\seq{p}$.

We now show that an $(\strictemb,\totalnonincl)$-embedding of $\seq{p}'$ in a sequence $\seq{s}$, denoted $\seq{e}=(e_i)_{i\in[m']}$, induces an $(\strictemb,\totalnonincl)$-embedding of $\seq{p}$.
By Definition \ref{def:NSP_embedding}, we have 
$p'_i \subseteq s_{e_i},\, \forall i\in[m']$
and $q'_i \totalnoninclrel \bigcup_{j\in [e_{i}+1,e_{i+1}-1]} s_j$, for all $i\in[m'-1]$.\\
However, $\seq{p}\lhd\seq{p}'$ implies that $p_i\subseteq p'_i$ for all $i\in[m]$. Then, because $m\leq m'$ ($\seq{p}\lhd\seq{p}'$), we have that $p_i\subseteq s_{e_i}$ for all $i\in[m]$.
In addition, $\seq{p}\lhd\seq{p}'$  also implies that $q_i\subseteq q'_i$ for all $i\in[m-1]$, by anti-monotonicity of $\totalnoninclrel$, we have $q'_i\totalnoninclrel \bigcup_{j\in [e_{i}+1,e_{i+1}-1]} s_j$, for all $i\in[m'-1]$. In conclusion, we have that $\seq{e}=(e_i)_{i\in[m]}$ is an $(\strictemb,\totalnonincl)$-embedding of $\seq{p}$.
\end{proof}

\begin{proof}[Proof of Proposition \ref{prop:antimon_nspinclplus}]
Let $\seq{p}=\langle p_1\ \neg q_1\ \dots\ \neg q_{k-1}\ p_k\rangle \in \mathcal{N}$ and $\seq{p}'=\langle p'_1\ \neg q'_1\ \dots\ \neg q'_{k'-1}\ p'_{k'}\rangle \in \mathcal{N}$ be two NSP s.t. $\seq{p}\nspinclplus\seq{p}'$. Thus, we have that $k=k'$.

Similarly to the proof of Proposition \ref{prop:antimon_lhd}, we can show that any $(\genemb,\totalnonincl)$-embedding of $\seq{p}'$ in $\seq{s}$ induces an $(\genemb,\totalnonincl)$-embedding of $\seq{p}$ in $\seq{s}$.
This enables to conclude that $\weaklycontains^{\totalnonincl}_{\genemb}$ is anti-monotonic on $(\mathcal{N},\nspinclplus)$.

The anti-monotonicity of $\stronglycontains^{\totalnonincl}_{\softemb}$ requires that each embedding of $\seq{p}^+$ in $\seq{s}$ satisfies the negations. 
Let us assume that $p'\stronglycontains^{\totalnonincl}_{\softemb} s$, then there exists an embedding $(e_i)_{i\in[k]}$ of $\seq{p}'$. $(e_i)_{i\in[k]}$ is also an embedding of $\seq{p}'^+$ (Lemma \ref{lemma:pos_embedding}). 
According to 1. in Definition \ref{def:nsp_relations}
and because $k=k'$, $\seq{p}'^+=\seq{p}^+$, and then $(e_i)_{i\in[k]}$ is an embedding of $\seq{p}^+$ in $\seq{s}$. Thus, we have shown that there is at least one embedding of $\seq{p}^+$ in $\seq{s}$.
If $\stronglycontains^{\totalnonincl}_{\softemb}$ is not anti-monotonic, then there exists an embedding $(e_i)_{i\in[k]}$ of $\seq{p}^+$ such that for some $j\in[k]$ and $l\in [e_{j}+1,e_{j+1}-1]$, it is false that $q_j \totalnoninclrel s_l$ ($\exists \alpha\in q_j,\, \alpha \not\in s_l$).
According to 2. in Definition \ref{def:nsp_relations},
$q_j \subseteq q'_j$, and thus it is false $q'_j \totalnoninclrel s_l$.
However, $(e_i)_{i\in[k]}$ is also an embedding of $\seq{p}'^+$. Since $p'\stronglycontains^{\totalnonincl}_{\softemb} s$, it follows that $q'_j \totalnoninclrel s_l$. There is a contradiction, thus $\stronglycontains^{\totalnonincl}_{\softemb}$ is anti-monotonic.

The anti-monotonicity of $\stronglycontains^{\totalnonincl}_{\strictemb}$ requires that each embedding of $\seq{p}^+$ in $\seq{s}$ satisfies the negations. 
Let us assume that $p'\stronglycontains^{\totalnonincl}_{\strictemb} s$, then there exists an embedding $(e_i)_{i\in[k]}$ of $\seq{p}'$. $(e_i)_{i\in[k]}$ is also an embedding of $\seq{p}'^+$ (Lemma \ref{lemma:pos_embedding}). According to 1.\ in Definition \ref{def:nsp_relations}, and because $k=k'$, $\seq{p}'^+=\seq{p}^+$, and then $(e_i)_{i\in[k]}$ is an embedding of $\seq{p}^+$ in $\seq{s}$. Thus, we have shown that there is at least one embedding of $\seq{p}^+$ in $\seq{s}$.
If $\stronglycontains^{\totalnonincl}_{\strictemb}$ is not anti-monotonic, then there exists an embedding $(e_i)_{i\in[k]}$ of $\seq{p}^+$ such that for some $j\in[k]$, it is false that $q_j \totalnoninclrel \bigcup_{l\in [e_{j}+1,e_{j+1}-1]}s_l$.
According to 2.\ in Definition \ref{def:nsp_relations}, $q_j \subseteq q'_j$, and thus it is false $q'_j \totalnoninclrel \bigcup_{l\in [e_{j}+1,e_{j+1}-1]}s_l$.
Nonetheless, $(e_i)_{i\in[k]}$ is also an embedding of $\seq{p}'^+$. And $p'\stronglycontains^{\totalnonincl}_{\strictemb} s$, it implies that $q'_j \totalnoninclrel \bigcup_{l\in [e_{j}+1,e_{j+1}-1]}s_l$. There is a contradiction, thus $\stronglycontains^{\totalnonincl}_{\strictemb}$ is anti-monotonic.
\end{proof}

\begin{proof}[Proof of Proposition \ref{prop:support_dominances}]
Let $\theta, \theta'\in\Theta$, then $\theta \dom \theta' \implies supp_\theta(\seq{p}) \leq supp_{\theta'}(\seq{p})$ for all $\seq{p}\in\mathcal{N}$ (by Definition \ref{def:dominance} of the dominance relation).
Thus, Proposition \ref{prop:support_dominances} comes immediately with Proposition \ref{prop:dominances}.
\end{proof}

\begin{proof}[Proof of Proposition \ref{prop:support_antimonotonie}]
Let $\seq{p}, \seq{p}'\in\mathcal{N}$ be two negative sequential patterns such that $\seq{p} \lhd \seq{p}'$.
According to Proposition \ref{prop:antimon_lhd}, $\seq{p}'\weaklycontains^{\totalnonincl}_{\genemb}\seq{s}\implies \seq{p}\weaklycontains^{\totalnonincl}_{\genemb}\seq{s}$ for all $\seq{s}$. Thus, $supp_{\totalnonincl,\genemb,\weaklycontains}(\seq{p}') \leq supp_{\totalnonincl,\genemb,\weaklycontains}(\seq{p})$.

If $\seq{p} \nspinclplus \seq{p}'$.
According to Proposition \ref{prop:antimon_nspinclplus}, $\seq{p}'\gencontains^{\totalnonincl}_{\genemb}\seq{s}\implies \seq{p}\gencontains^{\totalnonincl}_{\genemb}\seq{s}$ for all $\seq{s}$. Thus, $supp_{\totalnonincl,\genemb,\gencontains}(\seq{p}') \leq supp_{\totalnonincl,\genemb,\gencontains}(\seq{p})$.
\end{proof}

\section{Additional dominance results}

Table \ref{tab:dominances_prop} below illustrates the dominance and non-dominance relations that are given in Proposition \ref{prop:dominances}. In this section, we use transitivity of the dominance relation to complete the table.

\begin{table*}[p]
\caption{Dominance relations from Proposition \ref{prop:dominances}. 
$\dom$ (resp.\ $\not\dom$) in the Table means that the semantic at the left of the row dominates (resp.\ does not dominate) the semantic at the top of the column. The indices are corresponding equation numbers in Proposition \ref{prop:dominances}. }
\label{tab:dominances_prop}

\centering
\begin{tabular}{ccccccccc}
\toprule
 & $(\partialnonincl,\strictemb,\stronglycontains)$ & $(\partialnonincl,\strictemb,\weaklycontains)$ & $(\partialnonincl,\softemb,\stronglycontains)$ & $(\partialnonincl,\softemb,\weaklycontains)$ & $(\totalnonincl,\strictemb,\stronglycontains)$ & $(\totalnonincl,\strictemb,\weaklycontains)$ & $(\totalnonincl,\softemb,\stronglycontains)$ & $(\totalnonincl,\softemb,\weaklycontains)$ \\
 \midrule
 $(\partialnonincl,\strictemb,\stronglycontains)$ & $\cdot$ & $\dom_{(\ref{eq:dominance3})}$ & $\dom_{(\ref{eq:dominance1})}$ & & $\not\dom_{(\ref{eq:nondominance1})}$ & $\not\dom_{(15)}$ & &  \\ 
 $(\partialnonincl,\strictemb,\weaklycontains)$ & $\not\dom_{(\ref{eq:nondominance2})}$ & $\cdot$ & $\not\dom_{(\ref{eq:nondominance4})}$ & $\dom_{(\ref{eq:dominance1})}$ & $\not\dom_{(13)}$ & $\not\dom_{(\ref{eq:nondominance1})}$ & &  \\ 
  $(\partialnonincl,\softemb,\stronglycontains)$ & $\not\dom_{(\ref{eq:nondominance3})}$ & $\not\dom_{(12)}$ & $\cdot$ & $\dom_{(\ref{eq:dominance3})}$ & & & $\not\dom_{(\ref{eq:nondominance1})}$ & \\ 
  $(\partialnonincl,\softemb,\weaklycontains)$ & & $\not\dom_{(\ref{eq:nondominance3})}$ & $\not\dom_{(\ref{eq:nondominance2})}$ & $\cdot$ & & & & $\not\dom_{(\ref{eq:nondominance1})}$\\ 
  $(\totalnonincl,\strictemb,\stronglycontains)$ & $\dom_{(\ref{eq:dominance4})}$ & & & & $\cdot$ & $\dom_{(\ref{eq:dominance3})}$ & $\dom_{(\ref{eq:dominance1})}$ & \\ 
  $(\totalnonincl,\strictemb,\weaklycontains)$ & $\not\dom_{(13)}$ & $\dom_{(\ref{eq:dominance4})}$ & & & $\not\dom_{(\ref{eq:nondominance2})}$ & $\cdot$ & & $\dom_{(\ref{eq:dominance1})}$ \\ 
  $(\totalnonincl,\softemb,\stronglycontains)$ & & & $\dom_{(\ref{eq:dominance4})}$ & & $\dom_{(\ref{eq:dominance2})}$ & & $\cdot$ & $\dom_{(\ref{eq:dominance3})}$ \\ 
  $(\totalnonincl,\softemb,\weaklycontains)$& & & $\not\dom_{(14)}$ & $\dom_{(\ref{eq:dominance4})}$ & & $\dom_{(\ref{eq:dominance2})}$ & $\not\dom_{(\ref{eq:nondominance2})}$ & $\cdot$ \\ 
 
\bottomrule
\end{tabular}
\end{table*}

\subsection{Transitive dominance relation}
Let $\theta\dom\theta'$ and $\theta'\dom\theta''$ then, by transitivity, we have $\theta'\dom\theta''$

\begin{itemize}
\item $\theta=(\totalnonincl,\strictemb,\stronglycontains)$, $\theta'=(\partialnonincl,\softemb,\stronglycontains)$ and $\theta''=(\partialnonincl,\strictemb,\stronglycontains)$. $\theta\dom\theta'$ is obtained by (5) and $\theta'\dom\theta''$ is obtained by (7).
\item ...
\end{itemize}

In Table \ref{tab:dominances_prop}, we deduce that the $x$ cell is $\dom$ when we have the following scheme (or symmetrical schemes): a square with a diagonal of $\dom$ and a diagonal with $x$ and a cell in the diagonal of the matrix. 
\begin{center}
\begin{tabular}{ccc}
$\dom$ & $\dots$ & $x$ \\
$\vdots$ & & $\vdots$ \\
$\cdot$ & $\dots$ & $\dom$ \\
\end{tabular}
\end{center}

Table \ref{tab:dominances_proof_transitivity} illustrates the dominance that can be deduced from the Proposition \ref{prop:dominances}. Table \ref{tab:dominances_proof_transitivity2} contains additional dominance relations deduced by second order transitivity.

%
%

\begin{table*}[p]
\caption{Dominance relations added by transitivity (with indices). 
The indices denote the coordinates (row-columns) which is the diagonally opposite relation in the "transitive-square" (see description in the text).
$\dom$ (resp.\ $\nodom$) in the Table means that the semantics at the left of the row dominates (resp.\ does not dominate) the semantics at the top of the column. 
}
\label{tab:dominances_proof_transitivity}

\centering
\begin{tabular}{ccccccccc}
\toprule
 & $(\partialnonincl,\strictemb,\stronglycontains)$ & $(\partialnonincl,\strictemb,\weaklycontains)$ & $(\partialnonincl,\softemb,\stronglycontains)$ & $(\partialnonincl,\softemb,\weaklycontains)$ & $(\totalnonincl,\strictemb,\stronglycontains)$ & $(\totalnonincl,\strictemb,\weaklycontains)$ & $(\totalnonincl,\softemb,\stronglycontains)$ & $(\totalnonincl,\softemb,\weaklycontains)$ \\
 \midrule
 $(\partialnonincl,\strictemb,\stronglycontains)$ & $\cdot$ & $\dom$ & $\dom$ & $\dom_{3-3}$ & $\nodom$ & $\nodom$ & &  \\ 
 $(\partialnonincl,\strictemb,\weaklycontains)$ & $\nodom$ & $\cdot$ & $\nodom$ & $\dom$ & $\nodom$ & $\nodom$ & &  \\ 
  $(\partialnonincl,\softemb,\stronglycontains)$ & $\nodom$ & $\nodom$ & $\cdot$ & $\dom$ & & & $\nodom$ & \\ 
  $(\partialnonincl,\softemb,\weaklycontains)$ & & $\nodom$ & $\nodom$ & $\cdot$ & & & & $\nodom$\\ 
  $(\totalnonincl,\strictemb,\stronglycontains)$ & $\dom$ & $\dom_{1-1}$ & $\dom_{1-1}$ & & $\cdot$ & $\dom$ & $\dom$ & $\dom_{7-7}$ \\ 
  $(\totalnonincl,\strictemb,\weaklycontains)$ & $\nodom$ & $\dom$ & & $\dom_{2-2}$ & $\nodom$ & $\cdot$ & & $\dom$ \\ 
  $(\totalnonincl,\softemb,\stronglycontains)$ & $\dom_{5-5}$ & & $\dom$ & $\dom_{3-3}$ & $\dom$ & $\dom_{8-8}$ & $\cdot$ & $\dom$ \\ 
  $(\totalnonincl,\softemb,\weaklycontains)$& & $\dom_{6-6}$& $\nodom$ & $\dom$ & & $\dom$ & $\nodom$ & $\cdot$ \\ 
 
\bottomrule
\end{tabular}
\end{table*}

\begin{table*}[p]
\caption{Dominance relations added by second-order transitivity (with indices). 
The indices denote the coordinates (row-columns) which is the diagonally opposite relation in the "transitive-square" (see description in the text).
$\dom$ (resp.\ $\nodom$) in the Table means that the semantics at the left of the row dominates (resp.\ does not dominate) the semantics at the top of the column.}
\label{tab:dominances_proof_transitivity2}

\centering
\begin{tabular}{ccccccccc}
\toprule
 & $(\partialnonincl,\strictemb,\stronglycontains)$ & $(\partialnonincl,\strictemb,\weaklycontains)$ & $(\partialnonincl,\softemb,\stronglycontains)$ & $(\partialnonincl,\softemb,\weaklycontains)$ & $(\totalnonincl,\strictemb,\stronglycontains)$ & $(\totalnonincl,\strictemb,\weaklycontains)$ & $(\totalnonincl,\softemb,\stronglycontains)$ & $(\totalnonincl,\softemb,\weaklycontains)$ \\
 \midrule
 $(\partialnonincl,\strictemb,\stronglycontains)$ & $\cdot$ & $\dom$ & $\dom$ & $\dom$ & $\nodom$ & $\nodom$ & &  \\ 
 $(\partialnonincl,\strictemb,\weaklycontains)$ & $\nodom$ & $\cdot$ & $\nodom$ & $\dom$ & $\nodom$ & $\nodom$ & &  \\ 
  $(\partialnonincl,\softemb,\stronglycontains)$ & $\nodom$ & $\nodom$ & $\cdot$ & $\dom$ & & & $\nodom$ & \\ 
  $(\partialnonincl,\softemb,\weaklycontains)$ & & $\nodom$ & $\nodom$ & $\cdot$ & & & & $\nodom$\\ 
  $(\totalnonincl,\strictemb,\stronglycontains)$ & $\dom$ & $\dom$ & $\dom$ & $\dom_{6-6}$ & $\cdot$ & $\dom$ & $\dom$ & $\dom$ \\ 
  $(\totalnonincl,\strictemb,\weaklycontains)$ & $\nodom$& $\dom$ & & $\dom$ & $\nodom$ & $\cdot$ & & $\dom$ \\ 
  $(\totalnonincl,\softemb,\stronglycontains)$ & $\dom$ & $\dom_{5-5}$ & $\dom$ & $\dom$ & $\dom$ & $\dom$ & $\cdot$ & $\dom$ \\ 
  $(\totalnonincl,\softemb,\weaklycontains)$& & $\dom$& $\nodom$ & $\dom$ & & $\dom$ & $\nodom$ & $\cdot$ \\ 
 
\bottomrule
\end{tabular}
\end{table*}

\subsection{Transitive non-dominance relation}
\begin{lemma}
Let $\theta$, $\theta'$ and $\theta''$ s.t. $\theta\dom\theta'$ and $\theta\not\dom\theta''$ then $\theta'\not\dom\theta''$
\end{lemma}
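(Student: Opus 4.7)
The plan is a one-line argument using contraposition together with the transitivity of $\dom$ established in Lemma~\ref{lemma:dom_preorder}. I would first state the contrapositive: to show $\theta' \not\dom \theta''$, it suffices to derive a contradiction from assuming $\theta' \dom \theta''$ (together with the hypotheses $\theta \dom \theta'$ and $\theta \not\dom \theta''$).

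Then I would apply Lemma~\ref{lemma:dom_preorder}, which says that $\dom$ is a pre-order and in particular transitive. From $\theta \dom \theta'$ and the assumed $\theta' \dom \theta''$, transitivity immediately yields $\theta \dom \theta''$. This directly contradicts the hypothesis $\theta \not\dom \theta''$, so the assumption $\theta' \dom \theta''$ must fail, i.e.\ $\theta' \not\dom \theta''$.

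There is no real obstacle here: the lemma is purely a structural consequence of $\dom$ being transitive, and all the semantic content about embeddings, non-inclusion, or occurrences is already packaged inside the transitivity statement of Lemma~\ref{lemma:dom_preorder}. The role of the lemma in the paper is clearly instrumental, namely to propagate non-dominance entries through the rows and columns of Table~\ref{tab:dominances} (and its auxiliary tables) once a single non-dominance counterexample has been exhibited, so the proof can remain at this abstract level without any case analysis on the eight elements of $\Theta$.
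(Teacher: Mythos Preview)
Your proposal is correct and matches the paper's own proof essentially line for line: both assume $\theta' \dom \theta''$, invoke transitivity of $\dom$ (Lemma~\ref{lemma:dom_preorder}) together with $\theta \dom \theta'$ to obtain $\theta \dom \theta''$, and conclude by contradiction with $\theta \not\dom \theta''$. Your added remark about the lemma's role in propagating non-dominance through the tables is accurate and in the spirit of the paper.
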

\begin{proof}
By absurd, suppose $\theta'\dom\theta''$ thus, by transitivity, $\theta\dom\theta''$ that is not possible.
\end{proof}

In Table \ref{tab:dominances_prop}, we deduce that the $x$ cell is $\not\dom$ when we have the following scheme (or symmetrical schemes): a square with a diagonal of $\dom$ and $x$; and a diagonal with $\not\dom$ and a cell in the diagonal of the matrix. 
\begin{center}
\begin{tabular}{ccc}
$\dom$ & $\dots$ & $\not\dom$ \\
$\vdots$ & & $\vdots$ \\
$\cdot$ & $\dots$ & $x$ \\
\end{tabular}
\end{center}

Table \ref{tab:dominances_proof_transitivity3} illustrates the non dominance relations that can be deduced from previously deduced dominance and non-dominance.
%

\begin{table*}[p]
\caption{Non-dominance relations added by transitivity. 
The indices denote the coordinates (row-columns) which is the diagonally opposite relation in the "transitive-square" (see description in the text).
$\dom$ (resp.\ $\nodom$) in the Table means that the semantics at the left of the row dominates (resp.\ does not dominate) the semantics at the top of the column.
}
\label{tab:dominances_proof_transitivity3}

\centering
\begin{tabular}{ccccccccc}
\toprule
 & $(\partialnonincl,\strictemb,\stronglycontains)$ & $(\partialnonincl,\strictemb,\weaklycontains)$ & $(\partialnonincl,\softemb,\stronglycontains)$ & $(\partialnonincl,\softemb,\weaklycontains)$ & $(\totalnonincl,\strictemb,\stronglycontains)$ & $(\totalnonincl,\strictemb,\weaklycontains)$ & $(\totalnonincl,\softemb,\stronglycontains)$ & $(\totalnonincl,\softemb,\weaklycontains)$ \\
 \midrule
 $(\partialnonincl,\strictemb,\stronglycontains)$ & $\cdot$ & $\dom$ & $\dom$ & $\dom$ & $\nodom$ & $\nodom$ & $\nodom_{7-5}$ & $\nodom_{8-6}$ \\ 
 $(\partialnonincl,\strictemb,\weaklycontains)$ & $\nodom$ & $\cdot$ & $\nodom$ & $\dom$ & $\nodom$ & $\nodom$ & $\nodom_{7-6}$ & $\nodom_{8-6}$ \\ 
  $(\partialnonincl,\softemb,\stronglycontains)$ & $\nodom$ & $\nodom$ & $\cdot$ & $\dom$ & $\nodom_{1-3}$& $\nodom_{6-2}$ & $\nodom$ & $\nodom_{8-2}$\\ 
  $(\partialnonincl,\softemb,\weaklycontains)$ & $\nodom_{1-2}$& $\nodom$ & $\nodom$ & $\cdot$ & $\nodom_{1-4}$& $\nodom_{6-8}$ & $\nodom_{8-4}$& $\nodom$\\ 
  $(\totalnonincl,\strictemb,\stronglycontains)$ & $\dom$ & $\dom$ & $\dom$ & $\dom$ & $\cdot$ & $\dom$ & $\dom$ & $\dom$ \\ 
  $(\totalnonincl,\strictemb,\weaklycontains)$ & $\nodom$ & $\dom$ & $\nodom_{8-6}$ & $\dom$ & $\nodom$ & $\cdot$ & $\nodom_{8-6}$ & $\dom$ \\ 
  $(\totalnonincl,\softemb,\stronglycontains)$ & $\dom$ & $\dom$ & $\dom$ & $\dom$ & $\dom$ & $\dom$ & $\cdot$ & $\dom$ \\ 
  $(\totalnonincl,\softemb,\weaklycontains)$& $\nodom_{6-8}$ & $\dom$& $\nodom$ & $\dom$ & $\nodom_{6-8}$ & $\dom$ & $\nodom$ & $\cdot$ \\ 
 
\bottomrule
\end{tabular}
\end{table*}
\end{document}